\pdfoutput=1

\documentclass{article}
\usepackage{arxiv}
\usepackage{graphicx}
\usepackage{float}

\usepackage{amsmath}
\usepackage{amssymb}
\usepackage{mathtools}
\usepackage{amsthm}
\usepackage{bm}
\usepackage{dsfont}
\usepackage{times}
\usepackage[ruled,linesnumbered]{algorithm2e}

\usepackage{tabularx}
\usepackage{amsfonts}       
\usepackage{nicefrac}       
\usepackage{microtype}      
\usepackage{xcolor}         
\usepackage[colorlinks=true,citecolor=blue]{hyperref}       
\usepackage{enumitem}
\usepackage{subcaption}
\usepackage[english]{babel}
\usepackage[capitalize,noabbrev]{cleveref}
\usepackage{parskip}
\usepackage{overpic}
\usepackage{cite}
\usepackage{multirow}
\usepackage{makecell}
\usepackage{tablefootnote}

\theoremstyle{remark}

\theoremstyle{plain}
\newtheorem{theorem}{Theorem}
\newtheorem{proposition}{Proposition}
\newtheorem{corollary}{Corollary}
\newtheorem{lemma}{Lemma}

\theoremstyle{definition}

\newtheorem{assumption}{Assumption}

\crefname{equation}{Eq.}{Eqs.}
\crefname{assumption}{Assumption}{Assumptions}
\crefname{definition}{Definition}{Definitions}
\crefname{theorem}{Theorem}{Theorems}
\crefname{lemma}{Lemma}{Lemmas}
\crefname{proposition}{Proposition}{Propositions}
\crefname{remark}{Remark}{Remarks}
\crefname{corollary}{Corollary}{Corollaries}

\usepackage[toc,page,header]{appendix}
\usepackage{titletoc}

\usepackage{comment}

\usepackage{graphicx,accents}

\providecommand{\ind}[1]{\mathds{1}\cbrc{ #1 }}
\providecommand{\Ham}[1]{\mathrm{Ham}\brc{ #1 }}
\providecommand{\E}{\mathbb{E}}

\newcommand{\norm}[1]{\left\lVert #1 \right\rVert}
\newcommand{\abs}[1]{\left| #1 \right|}
\makeatletter
\DeclareRobustCommand{\rev}[1]{%
  \mathpalette\do@rev{#1}%
}
\newcommand{\do@rev}[2]{%
  \fix@rev{#1}{+}%
  \reflectbox{$\m@th#1\vec{\reflectbox{$\fix@rev{#1}{-}\m@th#1#2\fix@rev{#1}{+}$}}$}%
  \fix@rev{#1}{-}%
}
\newcommand{\fix@rev}[2]{%
  \ifx#1\displaystyle
    \mkern#23mu
  \else
    \ifx#1\textstyle
      \mkern#23mu
    \else
      \ifx#1\scriptstyle
        \mkern#22mu
      \else
        \mkern#22mu
      \fi
    \fi
  \fi
}
\makeatother

\providecommand{\KL}[2]{\mathrm{KL}( #1 || #2 )}
\providecommand{\TV}[2]{\mathrm{TV}( #1 , #2 )}
\newcommand{\brc}[1]{\left( #1 \right)}
\newcommand{\sbrc}[1]{\left[ #1 \right]}
\newcommand{\cbrc}[1]{\left\{ #1 \right\}}
\providecommand{\eps}{\varepsilon}

\renewcommand{\d}{\mathrm{d}}
\providecommand{\rmW}{\mathrm{W}}

\providecommand{\mbP}{\mathbb{P}}

\providecommand{\calN}{\mathcal{N}}
\providecommand{\calX}{\mathcal{X}}

\providecommand{\calO}{\mathcal{O}}

\newcommand{\poly}{\mathrm{poly}}

\providecommand{\mask}{\mathrm{[MASK]}}

\DeclareMathOperator*{\polylog}{polylog}

\providecommand{\yuchen}[1]{{\color{red} #1}}

\let\citep\cite

\title{From Scores to Gibbs Correctors: Accelerating Uniform-Rate Discrete Diffusion Models}

\author{Yuchen Liang \quad Ness Shroff \quad Yingbin Liang \\
The Ohio State University 
}

\begin{document}

\maketitle

\begin{abstract}
Discrete diffusion models have achieved strong empirical performance in text and other symbolic domains, but, especially for uniform-rate models, they often require many steps to generate a single sample. Existing acceleration methods either rely on training additional quantities or suffer from slow mixing. In this work, we propose a novel Gibbs-based corrector for discrete diffusion models, termed Gibbs-Accelerated Discrete Diffusion (GADD). GADD leverages the structure of the concrete score function to construct Gibbs posterior likelihoods directly, without requiring any additional training beyond standard score estimation. We show that GADD achieves an overall sampling complexity of $\mathcal{O}(\mathrm{polylog} (\varepsilon^{-1}))$, yielding the first such rate for diffusion-based samplers for uniform-rate discrete diffusion models. We also conduct numerical experiments demonstrating the practical advantages of GADD across synthetic data, zero-shot text sampling, and zero-shot conditional music generation. These results corroborate the theory and show that GADD consistently improves sample quality and wall-clock efficiency over standard baselines, including vanilla Euler methods and CTMC correctors. Beyond this, our theoretical analysis introduces a novel framework for analyzing predictor–corrector methods in discrete diffusion models, which may be of independent interest. Unlike existing approaches that rely on the Girsanov change-of-measure technique, our method is based on an induction argument that tracks error propagation across predictor iterations while accounting for inaccuracies in the corrector updates.
\end{abstract}

\section{Introduction}

Generative modeling is a central problem in deep learning, aiming to learn distributions that closely match observed data and enable high-quality sample generation. In recent years, diffusion models \cite{sohldickstein2015,ho2020ddpm,austin2021discrete} have emerged as a powerful and flexible generative framework, achieving state-of-the-art performance across a wide range of applications.
Among them, discrete diffusion models operate directly over discrete sample spaces and have proven highly effective for discrete-data domains. They have achieved strong empirical performance in natural language processing \cite{nie2025llada}, graph generation \cite{vignac2023digress,diffusion-survey-graph}, musical note generation \cite{campbell2022discrete}, and molecular and drug design \cite{diffusion-survey-drug-design,nisonoff2025dcfg}, among others.

Despite their empirical success, diffusion models face a major bottleneck in generation efficiency. Unlike classical generative models such as GANs, diffusion models typically require a large number of iterative steps to generate a single sample. A substantial body of work has explored accelerated sampling methods for {\em continuous} diffusion models, including predictor-corrector schemes \citep{song2020sde}, Runge-Kutta methods \citep{wu2024runge-kutta}, Hessian-based approaches \citep{liang2024discrete}, and, more recently, successive refinement techniques \citep{li2025higher-order}. In contrast, relatively few studies have focused on accelerating {\em discrete} diffusion models (see \Cref{sec:works} for details).

Among the two canonical classes of {\em discrete} diffusion models, {\em masked (i.e., absorbing-rate)} diffusion models (MDMs) exhibit unique structure induced by the masking dynamics: once a token reaches the absorbing masked state it remains fixed thereafter, which facilitates the development of faster samplers through structured unmasking steps. This built-in structure can be exploited to design efficient samplers. For example, \cite{zheng2025fhs} showed that MDMs admit an analytic characterization of sampling times, requiring at most $d$ steps to generate high-quality samples, and \cite{zhao2025informed} exploited informed correctors that are specially designed for MDMs.
In contrast, despite the strong language modeling performance of masked diffusion models (MDMs), {\em uniform-rate} discrete diffusion models remain widely used in domains such as graph and molecule generation, where masking-based approaches can suffer from state-clashing and produce invalid structures (e.g., molecules violating chemical rules) \citep{seo2025mask-graph,zhang2025mgdiff}. More recently, \cite{sahoo2026uniform-beat-mask} showed that uniform-rate models augmented with predictor-corrector methods can outperform both autoregressive (AR) and mask diffusion models in few-step text generation.

Despite promise, accelerating uniform-rate models is challenging and comparatively underexplored, since they lack the salient structure exploited by masking-based designs.
From a theoretical perspective, random-step samplers (e.g., uniformization) can simulate the reverse process exactly up to score-estimation error, but their realized step counts can be unbounded. For deterministic-step samplers, existing total-variation complexity bounds scale only polynomially in $\eps$, where $\eps$ is the target accuracy (see \Cref{tab:literature} for details): $\mathcal{O}(\eps^{-1})$ for exact and $\tau$-leaping methods \citep{campbell2022discrete,zhang2025conv-disc}, $\mathcal{O}(\eps^{-2})$ for Euler and Tweedie $\tau$-leaping \citep{liang2025sampler}. More recently, \citep{ren2025fast} proposed an accelerated approach achieving an improved rate of $\mathcal{O}(\eps^{-1})$ by leveraging higher-order numerical solvers to more accurately approximate the reverse process, inspired by advances in higher-order numerical solvers \citep{wu2024runge-kutta}. Overall, existing samplers for uniform-rate models achieve only {\em polynomial} dependence on $\eps$, which naturally raises the following question:

\emph{Question: Can we break the $\calO(\poly(\eps^{-1}))$ barrier and achieve a logarithmic dependence, namely $\calO(\polylog(\eps^{-1}))$, for uniform-rate discrete diffusion models?}




Our study provides affirmative answer to this question.

\subsection{Our Contributions}

In this paper, we propose the \textit{first} uniform-rate discrete-diffusion-based sampler that achieves $\calO(\polylog(\eps^{-1}))$ convergence rate. Our detailed contribution are as follows.

\textbf{Algorithmic Design:} Our key contribution is to propose a novel Gibbs-based corrector, the Gibbs-Accelerated Discrete Diffusion (GADD) algorithm, for sampling from uniform-rate discrete diffusion models. Notably, leveraging properties of the existing concrete score function, our GADD algorithm \emph{does not need extra-training} beyond the score functions. Instead, the posterior likelihood required for the Gibbs update can be directly obtained from the score estimator.

\textbf{Theoretical Analysis:} We theoretically show that GADD achieves an overall sampling complexity of $\mathcal{O}(\polylog (\eps^{-1}))$, improving upon existing diffusion-based sampling methods for uniform-rate discrete diffusion models. 
In comparison, all previous samplers only achieve the convergence rate of $\calO(\poly(\eps^{-1}))$.
In particular, we show that the diffusion process naturally provides a \emph{warm-start} to apply Gibbs correctors; such a warm-start simultaneously reduces both the Gibbs convergence error (under perfect per-step update) and the error from inaccurate score estimation.

\textbf{Numerical Experiments:} We further conduct numerical experiments demonstrating the practical advantages of GADD across synthetic data, zero-shot unconditional text sampling, and zero-shot conditional music generation. These results corroborate our theoretical findings and show that GADD achieves superior efficiency and robustness compared to previous approaches. In particular, GADD consistently improves sample quality at the same NFE budget, performs well on challenging spiky target distributions, and yields favorable wall-clock efficiency in realistic discrete-generation settings.


\textbf{General Framework for Predictor-Corrector Sampling:} Overall, we develop a general framework for analyzing predictor-corrector methods in discrete diffusion models, based on an induction argument that tracks error propagation across iterations while accounting for inaccurate corrector updates. By decomposing the error into initialization, mixing, and estimation components, the framework provides explicit conditions for controlling global error without stronger score accuracy requirements than in vanilla diffusion models. Beyond GADD, it also applies to the classical CTMC corrector \cite{campbell2022discrete}, offering a unified view of its convergence under approximate updates.

\begin{table*}[t]
    \centering
    \begin{tabular}{c|c|c}
        \textbf{Algorithm} & \textbf{Number of steps} & \textbf{Paper Reference} \\ \hline \hline

        Exact & $\widetilde{\calO}\brc{\frac{\sqrt{d}}{\eps}}$ & \cite{zhang2025conv-disc} \\ \cline{1-3}
        
        \multirow{3}{*}{$\tau$-leaping} & $\widetilde{\calO}\brc{\frac{d^4}{\eps}}$ & \cite{campbell2022discrete} \\ \cline{2-3}

        & $\widetilde{\calO}\brc{\frac{d^2}{\eps^2}}$ & \cite{ren2025stoc-int} \\ \cline{2-3}

        & $\widetilde{\calO}\brc{\frac{d}{\eps^2}}$ & \cite{dmitriev2026discrete} \\ \cline{1-3}

        Euler Method and Tweedie $\tau$-leaping & $\widetilde{\calO}\brc{\frac{d^2}{\eps^2}}$ & \cite{liang2025sampler} \\ \cline{1-3}
        
        DMPM (and its variants) & $\widetilde{\calO}\brc{\frac{d}{\eps^{4}}}$ & \makecell{\cite{conforti2025discrete} \\ \cite{conforti2025markov}} \\
        \hline \hline

        $\theta$-RK-2 and $\theta$-Trapezoidal & $\widetilde{\calO}\brc{\frac{\poly(d)}{\eps}}^\dagger$ & \cite{ren2025fast} \\ \cline{1-3}

        \color{blue} GADD (ours) & \color{blue} $\widetilde{\calO}\brc{\frac{\log^2(d/\eps^2)}{\rho_*}}$ & \color{blue} (This paper, Thm~\ref{thm:conv-unif})



        
    \end{tabular}
    \caption{{\em Summary of results for uniform-rate discrete diffusion samplers in terms of the number of steps needed to achieve $\calO(\eps)$ accuracy in $\TV{q_\delta}{\hat{p}_{T-\delta}}$ (or, equivalently, $\calO(\eps^2)$ for $\KL{q_\delta}{p_{T-\delta}}$), where  $q_\delta$ is a perturbed distribution satisfying $\TV{q_0}{q_\delta} \lesssim d \delta$. We only display those results having deterministic number of step-sizes.
    Here, $d$ denotes the dimension (e.g., length of the generated sentence), and $\rho_*$ is the path-wise worst-case spectral gap for Gibbs samplers (which is $\eps$-independent, and which is typically in the order $\Omega(d^{-1})$ for well-structured distributions). 
    $\dagger$ The $\poly(d)$ dependence is what we infer, which is not provided in their paper.
    Overall, our result achieves the first $\calO(\polylog (\eps^{-1}))$ convergence rate, whereas all prior works achieve polynomial dependence on $\eps^{-1}$.
    }}
    \label{tab:literature}
\end{table*}

\subsection{Related Works}
\label{sec:works}

In this subsection, we focus on related works that provide convergence guarantees for uniform-rate discrete diffusion models. Please see \Cref{app:works-more} for additional related works.

\textbf{Convergence Theory on Uniform-Rate Discrete Diffusion Samplers.} A growing body of work studies the number of steps required for samplers to achieve $\eps$-TV guarantees (see \Cref{tab:literature}). The earliest such result, \cite{campbell2022discrete}, analyzed $\tau$-leaping under the TV metric.
Subsequent studies considered two classes of samplers. For random-step methods, \cite{ren2025stoc-int,chen2024uniformization} established guarantees for the uniformization sampler, though the realized number of steps may be unbounded. For deterministic-step methods, \cite{zhang2025conv-disc} obtained strong guarantees under the assumption of access to a perfect per-step solver, while \cite{conforti2025discrete,conforti2025markov} achieved low dependence on $d$ via the DMPM sampler at the cost of a high $\eps$ dependence. \cite{ren2025stoc-int,dmitriev2026discrete} further improved guarantees for $\tau$-leaping, and more recently, \cite{liang2025sampler} provided the first analysis of the Euler method and Tweedie $\tau$-leaping. Notably, all these analyses of deterministic-step samplers yield $\calO(\poly(\eps^{-1}))$ convergence rates.

Concurrent with our work, \cite{kan2026adjoint} derives an $S$-independent upper bound on the error under the assumption of exact simulation of the continuous-time sampling process; however, the corresponding step complexity when there is discretization remains unclear.

\textbf{Acceleration of Uniform-Rate Discrete Diffusion Samplers.}
Compared to the extensive literature on standard samplers, relatively few works study the acceleration of uniform-rate discrete diffusion samplers.
Several empirical approaches have been proposed to accelerate discrete diffusion models. 
\cite{campbell2022discrete} first introduced a predictor-corrector scheme with a CTMC corrector that moves the particle toward the target distribution, and \cite{gat2024dfm} generalized this forward-backward idea and developed a unified corrector especially for discrete flow-matching. More recently, \cite{sahoo2026uniform-beat-mask} developed a family of $\Psi$-samplers that achieve strong performance in both language and image modeling. Notably, the correctors in \cite{sahoo2026uniform-beat-mask,gat2024dfm} are directly tied to the mean parameterization, in contrast to the score parameterization considered in our work.
On the theoretical side, \cite{ren2025fast} accelerated discrete diffusion models using higher-order numerical methods based on Runge-Kutta schemes, and achieves a convergence rate of $\calO(\eps^{-1})$ in terms of the target accuracy $\eps$.




\section{Preliminaries}
\label{sec:prelim}

In this section, we review both discrete diffusion models and Gibbs samplers, the two major components in the work.

\subsection{Discrete Diffusion Sampling}

Discrete diffusion models are defined by a forward noising process and a corresponding reverse denoising process involving \emph{discrete} data. The forward process is formulated as a continuous-time Markov chain (CTMC) on the discrete state space $\calX = [S]^d$, where $d$ denotes the number of tokens and each token takes values from a vocabulary of size $S$. Let $x_0 \in [S]^d$ denote the initial data sample, distributed according to the probability mass function (p.m.f.) $q_0$.
Let $R_t \in \mathbb{R}^{S^d \times S^d}$ be the (time-dependent) rate matrix of the CTMC. For any two states $x,y \in [S]^d$, the entry $R_t(x,y)$ specifies the instantaneous transition rate from $x$ to $y$ at time $t$. The conditional distribution of the state at time $t+\Delta t$ given the state at time $t$ satisfies
\begin{equation}\label{eq:def_forward}
    q_{t+\Delta t \mid t}(y \mid x)
    = \ind{y=x} + R_t(x,y)\,\Delta t + o(\Delta t),
\end{equation}
where $\ind{\cdot}$ denotes the indicator function. Validity of the CTMC requires that $R_t(x,y)\ge 0$ for all $x\neq y$ and $\sum_{y} R_t(x,y)=0$ for all $x$. Among the many choices  of $R_t$, one typical choice is the ``uniform-rate'' matrix \cite{lou2024entropy}. Specifically, the ``uniform-rate'' is defined as
\begin{equation*}
    R_t(x,y) = \begin{cases}
    \frac{1}{S} & \text{if}~\Ham{x,y}=1 \\
    0 & \text{if}~\Ham{x,y} \geq 2
\end{cases},
\end{equation*}
where $\Ham{x,y}$ denotes the Hamming distance between $x$ and $y$.
With such an $R_t$, the tokens would evolve independently and homogeneously for each token \citep{campbell2022discrete,lou2024entropy}. Also, one can show that $q_T \approx \mathrm{Uniform}([S]^d)$ and the mixing speed is exponentially fast \cite{zhang2025conv-disc}. 

To sample from the diffusion model, one typical approach is to walk through the reverse process, which is defined as the exact time reversal of the forward CTMC with initial distribution equal to $q_T$ \citep{campbell2022discrete,kelly2011reverse}. By \cite{campbell2022discrete}, the reverse process is itself a CTMC whose transition rate is $\rev{R}_{t}(x,y) := R_{t}(y,x) q_t(y)/q_t(x)$, which depends on the density ratio for all Hamming-distance-1 pairs $(x,y)$, a quantity commonly referred to as the \emph{(concrete) score}. Since this quantity is generally intractable, we estimate it using a neural network, with one popular choice of loss function: the score-entropy loss \cite{lou2024entropy}.
After we have obtained an approximate score, for practical sampling, we can discretize the continuous-time reverse process and employ approximate sampling methods for CTMC, such as the Euler method \cite{lou2024entropy}.
More specifically, define the estimated reverse rate as $\hat{R}_{t}(x,y) := R_{t}(y,x) s_{t}(y,x)$.
Upon initializing $x_{t_N} \sim \mathrm{Uniform}([S]^d)$ (with $t_N = T$), the Euler method is given by, for each $k=N-1,\dots,0$,
\begin{equation} \label{eq:def_euler}
x^i_{t_{k}} =
\begin{cases}
    a, & \text{w.p.}~\hat{R}_{k+1}^i(x_{t_{k+1}}^i,a) (t_{k+1}-t_k),~\forall a \neq x^i_{t_{k+1}}\\
    x^i_{t_{k+1}}, & \text{w.p.}~1 + \hat{R}_{k+1}^i(x_{t_{k+1}}^i,x_{t_{k+1}}^i) (t_{k+1}-t_k)
\end{cases},
\end{equation}
where
\[ \hat{R}_{k+1}^i(x^i,a) := \hat{R}_{t_{k+1}}(x, x^{-i} \oplus_i a),~\forall a \neq x^i, ~~\text{and}~ \hat{R}_{k+1}^i(x^i,x^i) := -\sum_{a \neq x^i} \hat{R}_{k+1}^i(x^i,a).\]

\subsection{Random-Scan Gibbs Samplers}

In the sampling literature, Gibbs sampling is among the most popular methods due to its simplicity, exactness of conditional updates, and ease of implementation. One typical Gibbs samplers is the (single-site) random-scan Gibbs sampler, where an update index is drawn randomly at each step. For a target distribution $\pi$ supported on $\calX$, define the one-step Markov transition kernel for random-scan Gibbs sampler as
\begin{equation} \label{eq:def-gibbs-kernel}
    P(x,y) := \begin{cases}
        w_i \pi^i(y^i|x^{-i}) & \text{if}~y^{-i} = x^{-i}\\
        0 & \text{otherwise}
    \end{cases}.
\end{equation}
Here $w_i$ is some weight function over the indices that sums up to 1.
For random-scan Gibbs samplers, such a kernel enjoys many regularity conditions of Markov operators. First, given its probabilistic nature, $P$ is aperiodic. Also, if $\pi(x) > 0$ over the support, $P$ is irreducible. Given these two properties, running such a Markov chain would yield a unique stationary distribution, which can be shown as $\pi$ itself. 
For a full review of Gibbs samplers (Glauber dynamics), we refer readers to \cite{levin2017markov-chain-book}. 

One way to characterize the convergence of such a Gibbs operator is through the Wasserstein-1 distance (associated with the Hamming metric), defined as:
\begin{equation*}
    \rmW_1(\nu_1, \nu_2) := \inf_{\Gamma\in\Pi(\nu_1,\nu_2)} \E_{(x,y) \sim \Gamma}\ \Ham{x,y},
\end{equation*}
where $\Gamma(\nu_1,\nu_2)$ is the set of couplings of $\nu_1$ and $\nu_2$.
Under certain classical assumptions (cf. Theorem 14.6 in \cite{levin2017markov-chain-book}), one can show that
\begin{equation} \label{eq:wass-conv-ass}
    \rmW_1(\nu_1 P, \nu_2 P) \leq (1-\rho) \rmW_1(\nu_1, \nu_2).
\end{equation}
Here $\rho$ is also called the \textit{spectral gap}, which depends on the structure of the underlying distribution.
The Wasserstein distance is closely related to the total-variation distance, which is a typical metric to characterize convergence in diffusion models (e.g., \cite{liang2025sampler,ren2025stoc-int}), as follows (see \cite[Proposition 4.7]{levin2017markov-chain-book}): 
\begin{equation} \label{eq:tv-wass-relationship}
    \TV{\nu_1}{\nu_2} \leq \rmW_1(\nu_1, \nu_2) \leq d \cdot \TV{\nu_1}{\nu_2}.
\end{equation}

\subsection{List of Notations}

Let $x^i (1\leq i \leq d)$ denote the $i$-th element of a vector $x \in [S]^d$ and $x^{-i} \in [S]^{d-1}$ denote the vector with the $i$-th element removed.
Define $\Ham{x,y}$ as the Hamming distance between two vectors $x$ and $y$.
For a positive integer $n$, $[n] := \{1,\dots,n\}$. 

\section{Discrete Diffusion Sampling with Gibbs Correctors}
\label{sec:GADD}

In this section, we present and analyze a novel Gibbs corrector for sampling from the uniform-rate CTMC, which we call the Gibbs-accelerated Discrete Diffusion (GADD) sampler.

\subsection{The GADD Algorithm}

We have summarized our algorithm in \Cref{alg:unif} below.

\begin{algorithm}
\caption{Gibbs-Accelerated Discrete Diffusion (GADD)}
\setcounter{AlgoLine}{1}
\SetKwInput{Input}{Input}
\SetKwInput{Return}{Return}

\Input{initialization $x_{t_N} \sim p_\text{init} = \mathrm{Uniform}([S]^d)$, discretization points $\cbrc{t_k}_{k=0}^N$ (with $t_N = T$ and $t_0 = \delta$), estimated score $s_t$, correction steps $\cbrc{L_k}_{k=0}^N$}
\label{alg:unif}
\For{$k = N-1$ \KwTo $0$}{
    $z_0 = $ the output of the Euler update given $x_{t_{k+1}}$ (using \eqref{eq:def_euler}) \;
    
    \For{$\ell = 1$ \KwTo $L_k$}{
        Sample $i$ w.r.t. weight $w_i$ over $[d]$\;
    
        Construct posterior $\hat{q}_{t_k}^i(\cdot|z_{\ell-1}^{-i})$ (e.g., using \eqref{eq:est-second} or \eqref{eq:est-first})\;
        
        Update $z_\ell^i \sim \hat{q}_{t_k}^i$ while fixing $z_\ell^{-i} = z_{\ell-1}^{-i}$\;
    }
    
    $x_{t_{k}} = z_{L_k}$\;
}
\Return{$x_{t_{0}}$}
\end{algorithm}

\Cref{alg:unif} comprises two components: an outer loop and an inner loop. The outer loop defines a sequence of target distributions ${q_{t_k}}$, each corresponding to a perturbed data distribution induced by the forward diffusion process. For each $t_k$, the inner loop runs a Gibbs sampler for $L_k$ steps targeting $q_{t_k}$. In practice, setting $L_k = 0$ corresponds to performing no correction at the current step. In particular, when $L_k = 0$ for all $k$ and the Euler update is applied throughout, \Cref{alg:unif} reduces to the standard Euler sampler defined in \eqref{eq:def_euler}.
We also note that the weights $w_i$ in the algorithm can be chosen in an informed manner, similar to \cite{zhao2025informed}. Finally, we provide a variant of \Cref{alg:unif} in \Cref{app:est-example}, which employs systematic-scan Gibbs updates rather than random-scan ones.




\subsection{Constructing Posterior Likelihood from Score}
\label{sec:pl-from-score}

To utilize the Gibbs corrector for target $q_t$ (where $t \in [\delta,T]$), one needs to have access to $q_t^i(\cdot|x^{-i})$ for all $t \in [\delta,T]$ and $i \in [d]$. Our key observation is that such information is readily available in the score function, which is the density ratio between all Hamming-distance 1 pairs. 
Notice that
\[ \textstyle q_{t}^i(x^i|x^{-i}) = \brc{ \sum_{y^i \in [S]} \frac{q_t^i(y^i | x^{-i})}{q_t^i(x^i | x^{-i})} }^{-1} = \brc{ \sum_{y^i \in [S]} \frac{q_t(x^{-i} \oplus_i y^i)}{q_t(x)} }^{-1}. \]
Thus, one example is to use the following estimator:
\begin{equation}\label{eq:est-second}
    \textstyle \hat{q}_{t}^i(x^i|x^{-i}) = \brc{\sum_{y^i \in [S]} s_t(x^{-i} \oplus_i y^i,x)}^{-1}.
\end{equation}
With a slight abuse of notation, we define $s_t(x,x) \equiv 1$ (which does not require any estimation). 
We have provided more estimator candidates in \Cref{app:est-example}.

To clarify, although the posterior estimator in \eqref{eq:est-second} (and also those in \Cref{app:est-example}) requires a summation over $[S]$, this does not necessarily require $S$ forward calls to the score model. For example, with the implementation of SEDD, a \emph{single} forward output of the score function is of size $(B,d,S)$ (where $B$ is the batch size), which contains all the corresponding scores (i.e., density ratios) used to aggregate. Thus, per Gibbs-step, we only need to call the score function \emph{once} to get an estimate of the posterior.

Such a construction enables direct evaluation of the posterior probabilities required by the Gibbs sampler, \emph{which does not require any extra-training and can be directly used for acceleration}. In contrast, previous correctors that employ Metropolis-Hastings algorithms \cite{huang2024rev-trans-kern} require an additional estimator for the likelihood ratio between all pairs of $(x,y)$, which is both computation- and memory-intensive given that $x,y \in [S]^d$ is high dimensional. With the estimator in \eqref{eq:est-second}, our Gibbs corrector avoids such auxiliary estimation and remains scalable to high-dimensional domains.

\subsection{Convergence Analysis of GADD} 

We first define a few useful notations. Let $p_{T-t_k,\ell}$ denote the sampling distribution at outer-loop step $k = N-1,\dots,0$ (with target $q_{t_k}$) and at inner-loop step $\ell = 1,\dots,L_k$, without estimation error. 
The following assumptions are needed for the score estimate.

\begin{assumption} \label{ass:score-ass}
Fix $t_k \in [\delta,T]$, $i \in [d]$ and $\ell \in [L_k]$. Suppose that score estimate satisfies that 
\[ \E_{\substack{x^{-i} \sim p^{-i}_{T-t_k,\ell} \\ x^i \sim q_{t_k}(\cdot|x^{-i}) }} \sum_{y:\Ham{x,y}=1} R_{t_k}(y,x) \abs{s_{t_k}(y,x) - \frac{q_{t_k}(y)}{q_{t_k}(x)}} \leq \eps_\text{est}. \]
\end{assumption}

Here we need a $L_1$-type score estimation error, which is different from the score-entropy error typically employed in the literature \cite{zhang2025conv-disc,ren2025stoc-int}. This estimation error arises because our analysis directly controls the total variation distance (see \Cref{prop:est-err}), a choice that has also been adopted in prior works \cite{campbell2022discrete,ren2025fast}. Additionally, for technical reasons, we require the expectation to be taken with respect to $p^{-i}_{T-t_k,\ell}$, which is a slightly perturbed version of the true target distribution $q_{t_k}^{-i}$. This condition is, however, likely to be satisfied when using a warmed Gibbs corrector, as is the case for GADD.

\begin{assumption}
\label{ass:score-reg}
Fix $t \in [\delta,T]$. Suppose that $s_t(y,x) \in [M^{-1}, M]$ for all $x$ and $y$.
\end{assumption}

The bound on $s_t$ is quite typical in prior literature \cite{zhang2025conv-disc,liang2025sampler}. Also, our main convergence result in \Cref{thm:conv-unif} only depends logarithmically on $M$.

With these assumptions, we are ready for the convergence guarantees of GADD.


\begin{theorem}[Convergence of GADD]
\label{thm:conv-unif}
Suppose that \Cref{ass:score-ass,ass:score-reg} are satisfied. Let $T \asymp \log(d (\log S)/\eps^2)$, 
$\eps_{\text{est}} = O \brc{ \frac{\rho_{\delta}}{M} \eps }$, $\delta \asymp \frac{\eps}{d}$, and $\kappa \asymp \brc{\log \log (d/\eps^2)}^{-1}$. Then, choosing $t_{k+1} - t_k \leq \kappa \min\cbrc{1, t_{k+1}}$, the total number of steps to achieve $\eps$ TV error for \Cref{alg:unif} suffices to have
\[ N_{total} \lesssim  \frac{\log(d/\eps^2) \log\brc{d^3 S/\eps^2 + d^2 M/\eps }}{\rho_*}, \]
where $\rho_* := \inf_{t \in [\delta,T]} \rho_t$. Note that we have omitted lower-order logarithmic factors.
\end{theorem}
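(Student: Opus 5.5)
We carry an induction over the outer loop, measuring the error at each outer step in $\rmW_1$ (Hamming) and converting to TV only at the end via \eqref{eq:tv-wass-relationship}. For outer step $k$ write $e_k := \rmW_1(\hat p_{T-t_k}, q_{t_k})$, where $\hat p_{T-t_k}$ is the law of $x_{t_k}$ produced by \Cref{alg:unif}. The per-step argument has three parts. (i) \emph{Predictor error.} One Euler step of \eqref{eq:def_euler} from $\hat p_{T-t_{k+1}}$ lands at distance at most $\TV{\cdot}{q_{t_k}}\le e_{k+1} + C d\,(t_{k+1}-t_k)\cdot(\text{local rate bound})$ from $q_{t_k}$. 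Since the uniform-rate scores at time $t$ are bounded by roughly $1+1/t$ per coordinate, the step rule $t_{k+1}-t_k\le\kappa\min\{1,t_{k+1}\}$ makes this local term $O(d\kappa)$ in $\rmW_1$, independently of $\eps$. This is the warm start. (ii) \emph{Gibbs contraction.} Under exact conditionals, \eqref{eq:wass-conv-ass} gives contraction by $(1-\rho_{t_k})^{L_k}\le e^{-\rho_* L_k}$. (iii) \emph{Estimation perturbation.} Each inexact Gibbs step differs from the exact kernel by $\E\,\TV{\hat q^i_{t_k}(\cdot|x^{-i})}{q^i_{t_k}(\cdot|x^{-i})}$, which changes $\rmW_1$ by at most that amount since one step moves a single coordinate. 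Using \eqref{eq:est-second}, $|1/a-1/b|\le |a-b|/(ab)$, and \Cref{ass:score-reg}, this TV is at most $M$ times the $L_1$ score error, so \Cref{ass:score-ass} controls it by $\lesssim M\eps_\text{est}$ per step. Here the expectation is under $p_{T-t_k,\ell}$, which is exactly why the assumption is phrased that way.

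A coupling argument over the inner chain, with perturbations contracted geometrically, gives the recursion
\[ e_k \le e^{-\rho_* L_k}\bigl(e_{k+1} + C d\kappa\bigr) + \frac{C M\eps_\text{est}}{\rho_{t_k}} .\]
The last term sums the geometric series of per-step perturbations, and with $\eps_\text{est}=O(\rho_\delta\eps/M)$ it is $O(\eps)$. The initialization term satisfies $e_N\le d\,\TV{q_T}{\mathrm{Uniform}}\lesssim d\sqrt{d\log S}\,e^{-T}$, which is small for $T\asymp\log(d\log S/\eps^2)$. We choose $L_k\asymp \rho_*^{-1}\log(d^3S/\eps^2+d^2M/\eps)$ so that $e^{-\rho_* L_k}\le \eps/(\text{poly}(d,S,M))$. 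The bracketed error is then driven to $O(\eps/N)$ at every step, and the induction closes with $e_0\lesssim \eps$ rather than an accumulation of $N$ local errors. The argument up to the final time $\delta$ gives TV at most $e_0$. Adding $\TV{q_0}{q_\delta}\lesssim d\delta=O(\eps)$ finishes the error bound.

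For the step count, the schedule $t_{k+1}-t_k\le\kappa\min\{1,t_{k+1}\}$ uses $N\lesssim (T+\log(1/\delta))/\kappa\asymp \log(d/\eps^2)\log\log(d/\eps^2)$ outer steps. The total is $N_{total}=\sum_k(1+L_k)\lesssim N\rho_*^{-1}\log(d^3S/\eps^2+d^2M/\eps)$, which is the claimed bound up to the $\log\log$ factor.

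The main obstacle is step (iii) together with the closing of the induction. The estimation error is only controlled in expectation under the idealized law $p_{T-t_k,\ell}$, not under the actual law of the algorithm. Converting it requires a change of measure between the two, which costs a factor tied to their TV distance, and that distance is itself what the induction bounds. To handle this I would first try to keep the induction hypothesis in the form ``the actual and ideal laws are within $O(\eps)$ in TV,'' and use the boundedness of the integrand to absorb the discrepancy. The integrand is bounded by $O(M)$ via \Cref{ass:score-reg}, which plausibly explains the $d^2M/\eps$ inside the logarithm.
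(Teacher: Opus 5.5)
\textbf{The gap is in the estimation term of your recursion, and the repair you sketch does not close it.} The term $CM\eps_{\text{est}}/\rho_{t_k}$ comes from pushing each one-step perturbation through the contraction of the exact kernel $P$:
$\rmW_1(\hat p_{T-t_k,\ell+1},q_{t_k})\le(1-\rho_{t_k})\rmW_1(\hat p_{T-t_k,\ell},q_{t_k})+\rmW_1(\hat p_{T-t_k,\ell}\hat P,\hat p_{T-t_k,\ell}P)$.
The last quantity is an expectation of the conditional TV mismatch under the \emph{actual} law $\hat p_{T-t_k,\ell}$. The opposite hybrid ordering, $p_{T-t_k,\ell}(\hat P-P)\hat P^{L_k-\ell-1}$, does put the expectation under the ideal law, but then you need $\hat P$ to contract, and nothing provides that. \Cref{ass:score-ass} controls only the ideal law $p_{T-t_k,\ell}$.

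Changing measure adds $\TV{\hat p_{T-t_k,\ell}}{p_{T-t_k,\ell}}$ to the $\ell$-th perturbation. The integrand is a TV, so it is bounded by $1$; you do not need an $O(M)$ bound. The natural bound on this discrepancy is the telescoping estimate $\ell M\eps_{\text{est}}$, which is itself the paper's \Cref{prop:est-err}. Summing with geometric weights then gives $\sum_{\ell<L_k}(1-\rho)^{L_k-1-\ell}(1+\ell)M\eps_{\text{est}}\approx L_kM\eps_{\text{est}}/\rho$. If you instead posit an $O(\eps)$ actual-versus-ideal gap as the induction hypothesis, the geometric sum gives $O(\eps/\rho)$. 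Either way you lose a factor $\rho_*^{-1}$ (typically of order $d$) relative to what $\eps_{\text{est}}=O(\rho_\delta\eps/M)$ buys.

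\textbf{What works, and what the paper does, is to keep the estimation error out of the contraction entirely.} Split in TV at the end of each inner loop:
$\TV{\hat p_{T-t_k,L_k}}{q_{t_k}}\le\TV{p_{T-t_k,L_k}}{q_{t_k}}+\TV{\hat p_{T-t_k,L_k}}{p_{T-t_k,L_k}}$.
The first term uses the $\rmW_1$ contraction only for the exact chain started from the actual predictor output, giving at most $(1-\rho_{t_k})^{L_k}d\,\TV{p_{T-t_k,0}}{q_{t_k}}$. The second term is the hybrid bound of \Cref{prop:est-err,lem:est-err-diffusion}, $\sum_{\ell<L_k}\E_{p_{T-t_k,\ell}}[\cdots]\le L_kM\eps_{\text{est}}$, whose weights are exactly the ideal laws appearing in \Cref{ass:score-ass}. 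The accumulation is linear rather than geometric, and it is $L_k\asymp\rho_{t_k}^{-1}\log(\cdot)$ that forces $\eps_{\text{est}}\propto\rho$. Your geometric $\rmW_1$ recursion would be a legitimate, log-sharper alternative only under a stronger assumption controlling the score error along the algorithm's actual law.

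Separately, the $d^2M/\eps$ inside the logarithm is not a change-of-measure artifact. It is $\frac{d}{\eps}\cdot\kappa dM$, the Euler predictor's displacement from \Cref{lem:perstep-init} under \Cref{ass:score-reg}, passed through $\rmW_1\le d\,\mathrm{TV}$. Likewise, $d^3S/\eps^2$ is the moving-target term $\frac{d}{\eps}\cdot\kappa dS\delta^{-1}$ evaluated at $\delta\asymp\eps/d$.
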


\Cref{thm:conv-unif} provides the \emph{first} convergence guarantee for diffusion-based algorithms to achieve $\mathcal{O}(\polylog (\varepsilon^{-1}))$ convergence rate, which shows the effectiveness of predictor-corrector schemes with Gibbs correctors. In particular, for sampling from well-structured target distributions (e.g., high-temperature Ising models) where the spectral gap satisfies $\rho_* = \Omega(\poly^{-1}(d))$, the total number of sampling steps admits an explicit dependence on the dimension: $N_{\mathrm{total}} = \mathcal{O}\big(\poly(d)\polylog(\varepsilon^{-1})\big)$. This is a substantial improvement over prior acceleration methods with theoretical guarantees \cite{ren2025fast}.

On a high level, the diffusion process and the Gibbs sampler benefit from each other in GADD. First, the Gibbs corrector, through its cascading updates, achieves an exponentially decaying error at each outer loop, reducing the total number of required steps from $\calO(\poly(\eps^{-1}))$ to $\calO(\polylog(\eps^{-1}))$. On the other hand, the diffusion process naturally provides a \textbf{warm-start} condition that enables the Gibbs sampler to converge more effectively (see \Cref{lem:perstep-init}). This also mitigates estimation errors arising from an inaccurate corrector, as fewer corrector steps are required to reach convergence (see \Cref{prop:est-err,lem:est-err-diffusion}).

\subsection{Proof Sketch of \texorpdfstring{\Cref{thm:conv-unif}}{Theorem~2}}
\label{sec:thm:conv_tv_absorb_sketch}

Here we provide a proof sketch of \cref{thm:conv-unif} to describe the idea of our analysis approach.
The full proof is provided in \Cref{app:thm:conv-unif-proof}.
Overall, the main technical novelty is to involve inaccurate correctors in the analysis. Different from previous predictor-only approaches \cite{liang2024discrete,ren2025stoc-int,dmitriev2026discrete}, here the total convergence error cannot be easily decomposed via Girsanov's change-of-measure theorem. Rather, the backbone of our approach is an induction argument over the predictor steps.
Such a predictor-corrector analysis can also be extended to other correctors, including the CTMC corrector as we show in \Cref{subsec:campbell_corr} (see \Cref{thm:ctmc_corr}). 

Overall, the sources of error include the outer-loop and inner-loop initialization errors, as well as error propagation arising from imperfect score estimation during the inner-loop iterations. Among these, the outer-loop initialization error is straight-forward to derive, with $\TV{p_{\text{init}}}{q_T} \lesssim \sqrt{d \log S} \cdot e^{-T/2}$ (see \eqref{eq:grand-init-tv}, cf. \cite{zhang2025conv-disc,ren2025fast}) .
As follows, we focus on the inner-loop errors.



\textbf{Step 1: Bounding Inner-loop Initialization Error (\Cref{lem:perstep-init}).} 
Assuming that the previous outer-loop step ends with a small error, our goal is to show that this can be propagated to the start of the corrector steps, yielding a \emph{warm-up} condition. Indeed, such an error can be decomposed as (see \eqref{eq:lem:perstep-init-main})
\[ \TV{p_{T-t_{k},0}}{q_{t_k}} \leq \underbrace{\TV{\hat{p}_{T-t_{k+1},L_{k+1}}}{q_{t_{k+1}}} }_{\text{error inherited from previous step}}+  \underbrace{\TV{q_{t_{k+1}}}{q_{t_k}}}_{\text{error from moving target}} + \underbrace{\TV{\hat{p}_{T-t_{k+1},L_{k+1}}}{p_{T-t_k,0}}}_{\text{error from predictor step}}. \]
In the above, the first term is small by our assumption. As long as the step-size vanishes as $t_{k+1} - t_k = \kappa = o(1)$, the second and third terms are also small since the true and estimated scores have non-diverging upper-bounds. Here unlike vanilla discrete diffusion samplers, our choice of $\kappa$ does not need to scale as $\calO(\eps)$, which allows for significantly fewer outer-loop steps.


\textbf{Step 2: Bounding Inner-loop Estimation Error (\Cref{prop:est-err,lem:est-err-diffusion,lem:est-err-diffusion-other}).} 
We now turn to the second source of error, which arises from inaccurate correctors. This setting differs from standard analyses of Gibbs sampling, where the target posterior likelihood is typically known (see \Cref{app:works-more} for a survey). To study this effect, we first analyze error propagation in general random-scan Gibbs samplers with target distribution $\pi$ (\Cref{prop:est-err}); this result may be of independent interest. We then specialize to diffusion-path distributions, using the posterior likelihood estimators in \eqref{eq:est-second} or \eqref{eq:est-first} (\Cref{lem:est-err-diffusion,lem:est-err-diffusion-other}). In both cases, the error can be upper-bounded provided the estimated scores satisfy suitable lower-bound conditions. Notably, the required form of score estimation closely matches that used in vanilla diffusion models \cite{campbell2022discrete,liang2026sharp}.


\textbf{Step 3: Combining all parts together.} We now combine the previous ingredients to establish the induction framework. 
Suppose that at the previous step $t_{k+1}$, we have already shown
\[ \TV{\hat{p}_{T-t_{k+1},L_{k+1}}}{q_{t_{k+1}}} \lesssim L_{k+1} \eps, \]
where $L_{k+1}$ denotes the number of corrector steps in the preceding outer-loop iteration. Then, \Cref{lem:perstep-init} provides an upper bound on the initialization error for the next corrector loop.
We can further decompose the error accumulated within the corrector loop as
\[ \TV{\hat{p}_{T-t_k,L_k}}{q_{t_k}} \leq \underbrace{\TV{p_{T-t_k,L_k}}{q_{t_k}}}_{\text{error from pure Gibbs sampling}} + \underbrace{\TV{\hat{p}_{T-t_k,L_k}}{p_{T-t_k,L_k}}}_{\text{error from inaccurate Gibbs transitions}}. \]
Here the second term can be bounded using \Cref{lem:est-err-diffusion} as $\TV{\hat{p}_{T-t_k,L_k}}{p_{T-t_k,L_k}} \lesssim L_k M \eps_{\text{est}}$.
Since $\eps_{\text{est}} = O(\eps/M)$, the inductive hypothesis is preserved.
Meanwhile, by standard Gibbs sampling convergence results (cf. \eqref{eq:wass-conv-ass}), in order to achieve $\TV{p_{T-t_k,L_k}}{q_{t_k}} \leq \eps$, it suffices to take $L_k \lesssim \rho_{t_k}^{-1}\log\brc{\frac{d^2}{\eps} (M + S \delta^{-1}) }$.
Summing over $k$ yields the desired total number of steps. Notably, our analysis introduces no additional hidden factors beyond those arising from pure Gibbs sampling.

\section{Analysis of CTMC Corrector}
\label{subsec:campbell_corr}

In this section, we further apply the framework developed in \Cref{sec:thm:conv_tv_absorb_sketch} to other existing correction schemes for discrete diffusion models, thereby demonstrating the generality of our analysis.
A common class of correctors in the literature is based on CTMC dynamics. Specifically, at outer-loop time $t_k$, an additional CTMC with an appropriately chosen correction rate is simulated to target the distribution $q_{t_k}$. One example of such a correction rate is given by (cf. \cite[Section~4.4]{campbell2022discrete}):
\begin{equation} \label{eq:campbell-corr-def}
    R^c_{k}(x,y) := R_{t_k}(x,y) + \hat{R}_{t_k}(x,y).
\end{equation}
Indeed, \cite[Proposition 4]{campbell2022discrete} shows that the stationary distribution of $R^c_{k}$ is correct. As follows, we analyze the non-asymptotic performance of such a corrector using the framework of \Cref{thm:conv-unif}.

\begin{theorem}
\label{thm:ctmc_corr}
Suppose perfect score estimation and that \Cref{ass:score-reg} holds. In order to achieve $\eps$-TV error, the total number of steps needs to satisfy (ignoring log-dependencies)
\[ N_{total} \lesssim \frac{d^2 M^2}{\mu_*^2 \eps^2}, \]
where $\mu_* := \inf_{t_k \in [\delta, T]} \mu_t$ is the minimum modified log-Sobolev constant of the rate matrix $R^c_{k}$.
\end{theorem}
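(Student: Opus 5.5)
We follow the same induction over predictor steps as in \Cref{thm:conv-unif}, swapping the Gibbs inner loop for a discretized simulation of the corrector CTMC with rate $R^c_k$ from \eqref{eq:campbell-corr-def}. Since scores are exact, $\hat R_{t_k}=\rev{R}_{t_k}$. By \cite[Proposition 4]{campbell2022discrete}, $q_{t_k}$ is stationary for $R^c_k$. The error at outer step $k$ then splits into three parts: the inherited initialization error (handled exactly as in \Cref{lem:perstep-init}), the mixing error of the continuous-time corrector run for time $\tau_k$, and the discretization error from simulating that corrector with $L_k$ Euler/$\tau$-leaping steps of size $h$, so that $\tau_k = L_k h$.

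\textbf{Mixing.} The modified log-Sobolev inequality with constant $\mu_{t_k}$ gives exponential entropy decay along the continuous corrector semigroup,
\[ \KL{\nu e^{\tau R^c_k}}{q_{t_k}} \le e^{-\mu_{t_k}\tau}\,\KL{\nu}{q_{t_k}}. \]
Pinsker then converts this to TV. The initial KL is at most $\log(\min_x q_{t_k}(x))^{-1}$. Using $t_k\ge\delta$ and the uniform-rate kernel, $q_{t_k}(x)\gtrsim(\delta/S)^d$, so the initial KL is $O(d\log(S/\delta))$. Hence taking $\tau_k \asymp \mu_{t_k}^{-1}\log(d\log(S/\delta)/\eps^2)$ suffices, which is only logarithmic. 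Since this bound needs no warm start, the mixing part is a cold-start argument.

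\textbf{Discretization.} This is the step I expect to dominate, and it produces the $\eps^{-2}$. Each Euler step of the corrector with step $h$ incurs a local TV error of order $h^2\cdot(\text{total jump rate})^2$. Under \Cref{ass:score-reg}, the total rate of $R^c_k$ at any $x$ is at most $d(S-1)(1+M)/S\lesssim dM$. A crude per-step bound of $(dMh)^2$ accumulates linearly over the $L_k$ steps, since the corrector is a Markov kernel and contracts TV. This gives a discretization error of about $L_k d^2M^2h^2=\tau_k\,d^2M^2h$.

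Two cruder routes give the stated rate. One can demand this error be $\lesssim\eps^2$ per block, say through a KL/Girsanov-type comparison on each block. Alternatively, to keep the sum over the $N$ outer steps at order $\eps$, one needs $h\lesssim \eps/(\tau_k d^2M^2)$. Then $L_k=\tau_k/h\approx \tau_k^2 d^2M^2/\eps$, so the $\tau_k^2\propto\mu^{-2}$ factor appears. I will try to match the stated bound $d^2M^2/(\mu_*^2\eps^2)$ by bounding the per-step discretization error in TV via a Girsanov/KL comparison. There the error is $\sqrt{\tau_k\cdot dM\cdot (dMh)}$ in TV. Setting this $\lesssim\eps$ gives $h\asymp \eps^2/(\tau_k d^2M^2)$, and hence $L_k\asymp \tau_k^2 d^2M^2/\eps^2\asymp d^2M^2/(\mu_*^2\eps^2)$ up to logs.

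\textbf{Combining.} Running the induction, the total TV error is the initialization error $\sqrt{d\log S}\,e^{-T/2}$, plus the final-block mixing error, plus the discretization error. The Euler predictor step contributes $O(dM\kappa)$-type errors, and these are absorbed by the next corrector block because mixing is a cold-start argument. So only the last block's output matters, and the accumulated errors need not be summed over $k$. Accordingly, only $O(1)$ outer steps need heavy correction, or all $N=O(\log(d/\eps)/\kappa)$ blocks each use $L_k$ steps, which adds only log factors. This yields $N_{total}\lesssim d^2M^2/(\mu_*^2\eps^2)$ up to logarithms.

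The main obstacle is making the discretization error of the corrector CTMC sharp enough to give exactly the $\mu_*^{-2}\eps^{-2}$ scaling, rather than a worse power.
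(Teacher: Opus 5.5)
Your final route is essentially the paper's proof: cold-start MLSI mixing converted to TV by Pinsker, plus a Girsanov/KL bound of order $\tau_k h\, d^2M^2$ on the discretized corrector, with $\tau_k \asymp \mu_{t_k}^{-1}\log(\cdot)$ and $h \asymp \eps^2/(\tau_k d^2M^2)$; the paper gets these from Bobkov--Tetali and \Cref{lem:campbell-discretization-err}, and your remark that the cold-start bound makes only the last block matter is correct. One correction to your framing: the direct TV telescoping bound $L_k d^2M^2h^2 = \tau_k h\, d^2M^2$ is sharper than the Pinsker route, not cruder, since it gives $L_k \asymp d^2M^2/(\mu_*^2\eps)$ up to logs and already implies the stated upper bound, so hitting exactly $\eps^{-2}$ is not an obstacle.
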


The proof of \Cref{thm:ctmc_corr} is provided in \Cref{app:thm:ctmc_corr-proof}. This result readily extends to the broader class of all CTMC correctors with the correct stationary distribution. In particular, \Cref{thm:ctmc_corr} shows that, \emph{even in the absence of estimation error}, achieving $\eps$-accuracy in TV requires $\calO(\poly(\eps^{-1}))$ total steps.
Compared to \Cref{thm:conv-unif}, this highlights the advantage of our proposed GADD sampler, which employs Gibbs correctors instead of CTMC-based ones. The proof of \Cref{thm:ctmc_corr} closely parallels that of \Cref{thm:conv-unif}, relying on the same induction strategy to handle the evolving target distributions and to control the accumulation of corrector errors due to a finite number of corrector steps.

Our analysis also sheds light on why CTMC correctors are inferior to Gibbs-based ones. Although the continuous-time CTMC converges to the target at an exponential rate, practical implementations require \emph{discretizing the continuous-time trajectory} (as in \cite{campbell2022discrete,gat2024dfm}). This discretization introduces a non-negligible error that significantly slows convergence, leading to an overall complexity of only $\calO(\poly(\eps^{-1}))$ (see \Cref{lem:campbell-discretization-err}).

\section{Numerical Experiments for GADD}
\label{sec:experiment}


In this section, we evaluate the numerical performance of our GADD algorithm on three datasets. 
We include all experiment details in \Cref{app:details}.

\subsection{Synthetic Experiment}

We first numerically compare our GADD algorithm in \Cref{alg:unif} on synthetic data against three baselines: (i) the Euler method, (ii) the Gibbs sampler, both of which are widely used and strong vanilla baselines for sampling from discrete data, and (iii) the $\theta$-Trapezoidal method \cite{ren2025fast}, a recently proposed accelerated sampler for uniform-rate discrete diffusion models.

\begin{figure}[ht]
\vspace{-3mm}
\begin{minipage}{0.45\linewidth}
\centering
\vspace{-5mm}
\includegraphics[height=6cm]{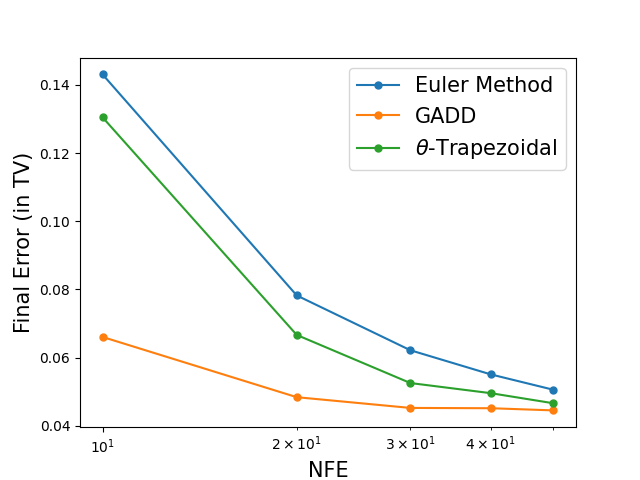}
\end{minipage}
\hfill
\begin{minipage}{0.45\linewidth}
\centering
\includegraphics[height=6cm]{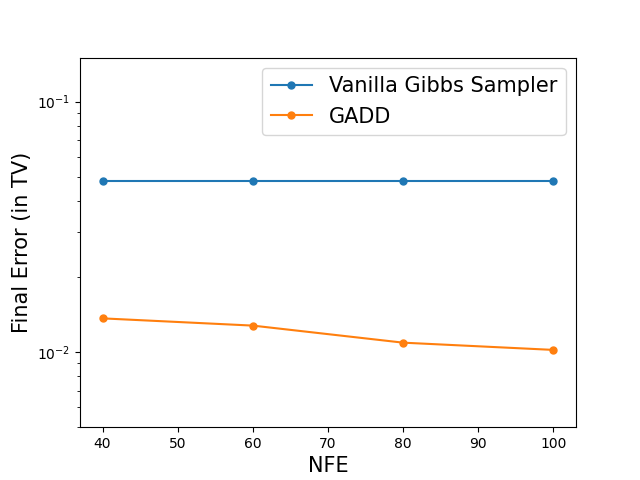}
\label{fig:gibbs}
\end{minipage}
\caption{
Comparison of GADD on synthetic data against the Euler method, the $\theta$-Trapezoidal algorithm (left), and the Gibbs sampler (right).} 
\label{fig:euler}
\end{figure}

We observe that, at the same number of function evaluations (NFEs), our GADD achieves a substantially smaller final error than both the Euler and $\theta$-Trapezoidal methods. This empirical advantage aligns with our theoretical result in \Cref{thm:conv-unif}: to attain a final total variation error of $\eps$, the required number of steps (and thus the NFE) scales as $\calO(\polylog (\eps^{-1}))$, which is significantly lower than that of existing samplers, whose complexity scales as $\calO(\mathrm{poly}(\eps^{-1}))$.

We further find that our GADD performs particularly well on spiky target distributions. In such settings, the vanilla Gibbs sampler is known to suffer from poor mixing behavior \cite{levin2017markov-chain-book}. In contrast, by leveraging the reverse diffusion process, GADD remains effective and achieves strong performance even under these challenging conditions. This observation provides empirical support for the \emph{warm-start} effect predicted by our theory, as discussed under \Cref{thm:conv-unif}.

\subsection{Experiment on Text Data}

We next consider sampling from text data. We first train a small SEDD Uniform model on the WikiText103 dataset using the suggested hyperparameters as in \cite{lou2024entropy}. We then compare the zero-shot sampling performance of GADD against several predictors, including the vanilla Euler method and the $\theta$-Trapezoidal method, as well as the CTMC corrector, which we analyze in \Cref{subsec:campbell_corr}. We follow the approach in \cite{zhao2025informed} and fix the corrector score at each outer-loop, and we do not re-evaluate the score model within the corrector loop.

\begin{table}[t]
\centering
\resizebox{\linewidth}{!}{
\begin{tabular}{ccccc}
\toprule
Method & NFE=32 & NFE=64 & NFE=128 & NFE=256 \\
\midrule
Vanilla Euler \cite{lou2024entropy}
& $356.0290 \pm 20.519$ (9s) 
& $285.1672 \pm 35.1706$ (19s) 
& $283.0323 \pm 24.3$ (39s) 
& $275.3841 \pm 41.248$ (79s) \\

$\theta$-Trapezoidal \cite{ren2025fast} 
& $325.9064 \pm 20.1654$ (11s) 
& $267.6957 \pm 30.701$ (23s) 
& $255.1958 \pm 6.6874$ (48s) 
& $265.9823 \pm 18.50979$ (97s) \\

CTMC corrector \cite{campbell2022discrete}
& $378.0972 \pm 19.0272$ (10s) 
& $272.653 \pm 10.6409$ (21s) 
& $227.6811 \pm 15.499$ (44s) 
& $219.8395 \pm 24.515$ (89s) \\

GADD (ours)
& $\mathbf{255.630 \pm 42.2442 }$ (6s) 
& $\mathbf{192.1529 \pm 35.0539}$ (13s) 
& $\mathbf{161.5748 \pm 26.1866}$ (26s) 
& $\mathbf{149.6407 \pm 17.4786}$ (53s) \\
\bottomrule
\end{tabular}
}
\smallskip
\caption{Zero-shot generative perplexity across different methods. Best result in \textbf{bold}.}
\label{tab:text}
\end{table}

In \Cref{tab:text}, we report the generative perplexity of the zero-shot generated outputs at fixed NFEs.\footnote{For example at NFE=32, the vanilla Euler method takes 32 predictor steps to complete, whereas GADD takes 16 predictor steps and 16 corrector loops.} Our proposed GADD method consistently achieves the best performance across all settings. In addition to this metric, we also report the actual wall-clock time. Notably, GADD exhibits lower wall-clock time compared to competing methods. This efficiency gain is likely due to differences in per-step computational cost: each Euler step updates all tokens simultaneously, whereas a random-scan Gibbs correction step updates only a single token per iteration. As a result, our method achieves a favorable trade-off between sample quality and computational efficiency.

The superiority of GADD over the CTMC corrector further supports our theoretical result in \Cref{thm:ctmc_corr}. While the complexity of the CTMC corrector scales as $\calO(\poly(\eps^{-1}))$, our GADD requires only $\calO(\polylog(\eps^{-1}))$, which is much smaller.

\subsection{Experiment on Monophonic Music Data}

\begin{table}[t]
\centering
\begin{tabular}{cccc}
\toprule
Method & Hellinger & Perplexity & Next-tokens Pred Err  \\
\midrule
Conditional SEDD \cite{lou2024entropy} 
& $0.1759 \pm 0.0061$
& $2.9255 \pm 0.0553$ 
& $0.3608 \pm 0.0141$ 
\\

+ GADD ($L=2$) 
& $0.0793 \pm 0.0044$
& $2.8295 \pm 0.0489$ 
& $0.3503 \pm 0.0223$ 
 \\

+ GADD ($L=5$)  
& $\mathbf{0.0780 \pm 0.0061}$ 
& $\mathbf{2.8108 \pm 0.0565}$ 
& $0.3373 \pm 0.0159$ 
\\

+ GADD ($L=50$) 
& $0.0817 \pm 0.0027$
& $2.8429 \pm 0.0568$ 
& $\mathbf{0.3238 \pm 0.0142}$ 
 \\
\bottomrule
\end{tabular}
\smallskip
\caption{Conditional generation qualities using Lakh pianoroll dataset \cite{dong2018lakh-dataset}. Best result in \textbf{bold}.}
\label{tab:music}
\vspace{-5mm}
\end{table}

Beyond text data, we also conduct experiments on zero-shot conditional discrete diffusion models \cite{lou2024entropy,liang2025conditional} on monophonic music data, obtained from \cite{dong2018lakh-dataset} and following the pre-processing steps in \cite{campbell2022discrete}. Different from unconditional sampling, the goal is to sample from the conditional distribution given part of the observable sequence, but in the zero-shot way without extra-training. In our setting, the goal is to complete a musical sequence given the first 100 notes. To evaluate the effect of correction, we compare: (i) the vanilla Conditional SEDD \cite{lou2024entropy}, a well-known zero-shot conditional discrete diffusion sampler, and (ii) the same model augmented with $L$ Gibbs correction steps (where $L \equiv L_k,~\forall k$).  We again fix the NFEs across all methods during evaluation.
Also, in addition to standard metrics such as the mean Hellinger distance of the histogram \cite{campbell2022discrete} and the generative perplexity (for which we trained an autoregressive model to evaluate), we introduce an additional metric: the mean prediction error over the next $F=4$ tokens, designed to capture the short-term temporal consistency in the generated sequences.

From \Cref{tab:music}, incorporating Gibbs correction steps yields significant improvements across all considered metrics. This highlights the generality of our Gibbs corrector, extending effectively even to conditional generation settings. We also observe that increasing the number of correction steps improves generation quality when $L$ is moderate. However, the gains diminish as $L$ grows, becoming marginal for larger values of $L$.In fact, the sample quality may saturate and even degrade as $L$ becomes large, due to stale correctors and accumulated random noise.

\section{Conclusion}

In this paper, we have developed an accelerated sampling algorithm for uniform-rate discrete diffusion models with provable convergence guarantees. Remarkably, our GADD algorithm achieved $\calO(\polylog(\eps^{-1}))$ convergence rate.
The accelerated rates were achieved by a novel Gibbs-based corrector that constructs posterior conditionals directly from the score function, thus without requiring additional training. We also demonstrated the superiority of GADD over previous CTMC correctors, along with its effectiveness on real-data experiments. An interesting direction for future work is to develop a theoretical understanding of predictor-corrector methods for masked diffusion models.

\section*{Acknowledgements}
    The work was supported in part by the U.S. National Science Foundation under the grants: NSF AI Institute (AI-EDGE) 2112471, ECCS-2413528, CNS-2312836, CNS-2223452, CNS-2225561, and was sponsored by the Army Research Laboratory under Cooperative Agreement Number W911NF-23-2-0225. The views and conclusions contained in this document are those of the authors and should not be interpreted as representing the official policies, either expressed or implied, of the Army Research Laboratory or the U.S. Government. The U.S. Government is authorized to reproduce and distribute reprints for Government purposes notwithstanding any copyright notation herein.

\bibliography{diffusion}

\newpage
\appendix
\begin{center}
    \huge \textbf{Appendix}
\end{center}


\allowdisplaybreaks
\startcontents[section]
{
\hypersetup{linkcolor=blue}
\printcontents[section]{l}{1}{\setcounter{tocdepth}{2}}
}

\crefalias{section}{appendix} 





\section{Additional Related Works}
\label{app:works-more}

We present the most relevant related works in \Cref{sec:works}; additional references are now discussed below.

\textbf{Theory of Acceleration for Continuous Diffusion Samplers.} A substantial body of theory has established accelerated convergence guarantees for continuous diffusion models, including both SDE and ODE solvers. Broadly, two main acceleration strategies have been explored. The first aims to more accurately approximate the continuous-time reverse process, for example by using higher-order numerical solvers such as Runge-Kutta methods or successive refinement (e.g., \cite{wu2024runge-kutta,liang2024discrete,li2025higher-order,li2024accl-prov}).
The second acceleration strategy augments the diffusion process with a corrector step that steers particles toward the target distribution. This predictor-corrector paradigm has been widely adopted in empirical studies of both continuous and discrete diffusion models \cite{campbell2022discrete,song2020sde}. From a theoretical perspective, \cite{huang2024rev-trans-kern} proposed the RTK-MALA and RTK-ULD algorithms, which incorporate Metropolis-Adjusted Langevin Algorithm (MALA) and Underdamped Langevin Dynamics (ULD) as correctors, respectively. Notably, RTK-MALA achieves a logarithmic convergence rate in $\eps^{-1}$.

\textbf{Theory on Gibbs Sampling.} Since the seminal work of \cite{geman1984gibbs}, Gibbs samplers have been shown to be effective across a wide range of applications, including sampling from graphical models, feasible sets of graph coloring problems \cite{levin2017markov-chain-book}, determinantal point processes \cite{dpp2019}, mixtures of densities \cite{yves2021spectral-gap}, and Bayesian hierarchical models \cite{ascolani2024gibbs-dimfree}. An orthogonal line of work studies the parallelization of Gibbs samplers under specific graph structures \cite{gonzalez2011parallel-gibbs}.
Notably, all of these works assume access to the exact posterior likelihood, whereas in diffusion-based settings the posterior must be estimated, introducing additional challenges that are not addressed by existing analyses.

\textbf{Empirical Studies on the Acceleration of Masked Diffusion Models.} Beyond acceleration for uniform-rate models, several empirical studies have focused on masked diffusion models. \cite{lezama2023discrete} constructed a discrete MCMC transition path for correction and learns a separate corrector kernel. More recently, \cite{zhao2025informed} proposed an informed correction strategy. Notably, these approaches are primarily empirical and do not provide theoretical convergence guarantees.

We now discuss \cite{zhao2025informed} in more detail. That work employs an informed Gibbs corrector specifically tailored to {\em masked (i.e., absorbing-rate)} diffusion models, whereas our work focuses on the {\em uniform-rate} setting. While both of their and our approaches utilize Gibbs-type correctors, uniform-rate diffusion models pose distinct algorithmic challenges. In particular, the score functions in uniform-rate models need to be estimated differently from those in masked diffusion, where the masked score (i.e., a time-scaled clean-data distribution) naturally facilitates Gibbs updates. In contrast, our method relies on posterior likelihood estimators specifically designed for uniform-rate models (e.g., \eqref{eq:est-second} and \eqref{eq:est-first}), enabling principled Gibbs corrections in this setting.



\section{Variants of GADD and Posterior Likelihood Estimate}
\label{app:est-example}

In this section, we provide some variants of both the GADD algorithm in \Cref{alg:unif} and the posterior estimator in \eqref{eq:est-second}. We first provide a systematic-scan variant of \Cref{alg:unif} as follows.

\begin{algorithm}
\caption{GADD with Systematic-scan Gibbs Correctors}
\setcounter{AlgoLine}{1}
\SetKwInput{Input}{Input}
\SetKwInput{Return}{Return}

\Input{initialization $x_{t_N} \sim p_\text{init}$, discretization points $\cbrc{t_k}_{k=0}^N$ (with $t_N = T$ and $t_0 = \delta$), estimated score $s$, correction steps $\cbrc{L_k}_{k=0}^N$}
\label{alg:unif-sys}
\For{$k = N-1$ \KwTo $0$}{
    $z_0 = $ the output of the Euler update given $x_{t_{k+1}}$ (using \eqref{eq:def_euler}) \;
    
    \For{$\ell = 1$ \KwTo $L_k$}{
    
        Construct posterior $\hat{q}_{t_k}^i(\cdot|z_{\ell-1}^{-i})$ for each $i \in [d]$ (e.g., using \eqref{eq:est-second} or \eqref{eq:est-first})\;
        
        Resample $z_\ell^i \sim \hat{q}_{t_k}^i$ for each $i \in [d]$;
    }
    
    $x_{t_{k}} = z_{L_k}$\;
}
\Return{$x_{t_{0}}$}
\end{algorithm}

Notably, since the score network outputs density-ratio scores for all $i \in [d]$ in a single forward pass, this modification does not incur any additional NFE per corrector step.

Next we provide variants of some posterior likelihood estimators. While \eqref{eq:est-second} is one possible estimator for the posterior likelihood from the trained score, such a construction is not unique. Notice that, for any $y^i \in [S]$,
\[ q_{t}^i(x^i|x^{-i}) = \frac{\frac{q_t(x^i|x^{-i})}{q_t(y^i| x^{-i})}}{\sum_{x^i \in [S]} \frac{q_t(x^i|x^{-i})}{q_t(y^i| x^{-i})}} = \frac{\frac{q_t(x)}{q_t(x^{-i} \oplus_i y^i)}}{\sum_{x^i \in [S]} \frac{q_t(x)}{q_t(x^{-i} \oplus_i y^i)}}. \]
Here the last equality again follows from Bayes' rule.
Thus, another estimator is, by fixing any $y^i \in [S]$,
\begin{equation*}
    \hat{q}_{t}^i(x^i|x^{-i}) = \frac{s_t(x,x^{-i} \oplus_i y^i)}{\sum_{x^i \in [S]} s_t(x,x^{-i} \oplus_i y^i)},
\end{equation*}
or, with averaging,
\begin{equation}
\label{eq:est-first}
    \hat{q}_{t}^i(x^i|x^{-i}) = \frac{1}{S} \sum_{y^i \in [S]} \frac{s_t(x,x^{-i} \oplus_i y^i)}{\sum_{x^i \in [S]} s_t(x,x^{-i} \oplus_i y^i)}.
\end{equation}
With a similar argument, another estimator is
\[ \hat{q}_{t}^i(x^i|x^{-i}) = \frac{ \sum_{y^i \in [S]} s_t(x,x^{-i} \oplus_i y^i)}{ \sum_{y^i \in [S]} \sum_{x^i \in [S]} s_t(x,x^{-i} \oplus_i y^i)}. \]

\section{Details of Numerical Experiments}
\label{app:details}

In this section, we provide more details on our synthetic and real-data experiments in \Cref{tab:text}.
For the real-data experiments, we use a Google H100 GPU during training and L4 GPU during sampling.

\subsection{Synthetic Experiment}

For the left figure of \Cref{fig:euler}, we consider a synthetic autoregressive-type data model, where the likelihood of each token depends on the preceding $h=2$ tokens. We choose the hyperparameters (e.g., $L_k$ in GADD) such that the number of function evaluations (NFE) is the same across all methods. The final errors are averaged over 10000 Monte Carlo runs.

For the right figure of \Cref{fig:euler}, we consider a mixed-point model in which the initial distribution is a sparse mixture of singletons, with support size on the order of $\calO(d)$. Again, the final errors are averaged over 10000 Monte Carlo runs.

\subsection{Text Experiment}
\label{app:details-text}

The sequence length is $d = 128$ and the dictionary size is $S = 50257$. We adopt the small SEDD Uniform model \cite{lou2024entropy} with $\sigma_{\min}=10^{-4}$ and $\sigma_{\max}=20$.

For optimization, we use Adam with a learning rate of $3 \times 10^{-3}$, $\beta_1=0.9$, and $\beta_2=0.999$, and apply a linear warm-up over the first 2500 steps. We train the model for a total of 111K steps. For evaluation, we use the typical GPT-2 model obtained from the HuggingFace library \cite{2019gpt2}.

For sampling, we use the Euler method as the backbone across all methods. All reported results are averaged over 10 independent runs. For the $\theta$-Trapezoidal method \cite{ren2025fast}, we fix $\theta = 0.5$. For the CTMC correctors \cite{campbell2022discrete}, we apply one correction step with a step size set to $1.5\times$ that of the Euler discretization, as suggested in their original paper. 

For GADD, we employ two techniques similar to those in \cite{zhao2025informed}. First, we use parallel Gibbs sampling, where the score function is not re-evaluated after each token update. Specifically, we randomly select one token to update at each step, without re-evaluating the score. This significantly reduces NFEs, while incurring only a minor performance degradation when $L_k$ is small. We choose $L_k = 40$ in our experiment.
Second, we update only tokens whose estimated probabilities fall below a fixed threshold, which is set to 0.1 in our experiments.

\subsection{Music Experiment}
\label{app:details-music}

The sequence length is $d = 256$ and the dictionary size is $S = 129$. We use the dataset in \cite{campbell2022discrete} with the pre-trained SEDD Uniform model from \cite{chu2025split-gibbs}. 

For sampling, we condition on the first 100 tokens of each sequence. We fix the number of function evaluations (NFE) to 16 across all methods. During sampling, the conditioned tokens are strictly enforced after each predictor and corrector step. For GADD, instead of randomly sampling from one token location, we use systematic update over all tokens. Again, we only perform correction when the current probability is below the threshold.

For evaluation, we use the test set that consist of 200 music pieces. As in \cite{campbell2022discrete}, the mean-histogram metric compares the Hellinger distance between the histograms from the true and generated samples. As an additional metric, we report the prediction error rate over the next $F=4$ tokens to assess short-term temporal consistency.

To compute sequence perplexity, we train a separate autoregressive (AR) model on the dataset in \cite{campbell2022discrete}. The model uses $n_{\text{head}}=8$, with 6 Transformer layers. The feed-forward dimension is set to 1024, with dropout rate 0.1. The model is trained using a learning rate of $3\times 10^{-4}$ for 400 epochs.






    
    


\section{Proof of \texorpdfstring{\Cref{thm:conv-unif}}{Theorem~1}}
\label{app:thm:conv-unif-proof}

Overall, the error in \Cref{alg:unif} arises from four distinct sources:
\begin{enumerate}
    \item the outer-loop (or grand) initialization error at the start of the algorithm;
    \item the inner-loop initialization error at the beginning of each Gibbs sampling phase; and
    \item the inner-loop error resulting from imperfect score estimation.
\end{enumerate}
We analyze each of these error components separately and then combine them to derive the overall convergence guarantee for \Cref{alg:unif}.


To begin, the following result gives us an upper bound on the grand (i.e., outer-loop) initialization error. 


From \cite[Proposition~2]{zhang2025conv-disc} and \cite[Theorem~C.1]{ren2025stoc-int}, the initialization error satisfies that
\[ \KL{\rev{q}_0}{p_0} \lesssim (d \log S) e^{-T}. \]
Thus, by Pinsker's inequality, we have
\begin{equation} \label{eq:grand-init-tv}
    \TV{p_{\text{init}}}{q_T} \lesssim \sqrt{\KL{p_{\text{init}}}{q_T} } \stackrel{(i)}{\lesssim} \sqrt{d \log S} \cdot e^{-T/2}.
\end{equation}

\subsection{Step 1: Bounding Inner-loop Initialization Error}

Our next result concerns the inner-loop initialization error. Specifically, we ask if the error at the end of the previous step is sufficiently small, how large the error will be at the start of the current step. Intuitively, this error remains small due to the smoothness of the diffusion process, which in turn yields a warm-up condition.

\begin{lemma} \label{lem:perstep-init}
Fix adjacent steps $s < t$ such that $t-s=o(1)$. 
Suppose that $\TV{\hat{p}_{T-t,L}}{q_t} \leq \eps$. Then, for the uniform-rate CTMC, under \Cref{ass:score-reg} and with the Euler update, we have
\[ \TV{p_{T-s,0}}{q_s} \lesssim (t-s) d (M + S \max\{1,s^{-1}\}) + \eps. \]
\end{lemma}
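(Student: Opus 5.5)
We start from the triangle-inequality decomposition in the proof sketch:
\[ \TV{p_{T-s,0}}{q_s} \leq \TV{\hat{p}_{T-t,L}}{q_t} + \TV{q_t}{q_s} + \TV{\hat{p}_{T-t,L}}{p_{T-s,0}}. \]
The first term is at most $\eps$ by hypothesis. It remains to show that the other two terms are each $\lesssim (t-s)\,d\,(M + S\max\{1,s^{-1}\})$.

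\textbf{Moving-target term.} We write $q_s = q_t$ evolved backward, or more conveniently $q_t = q_s e^{(t-s)R}$ under the forward uniform-rate CTMC. Then $\TV{q_s}{q_t} \leq \Pr(\text{at least one jump in } [s,t])$. The total jump rate out of any state under the uniform rate is $d(S-1)/S \leq d$. Hence $\TV{q_s}{q_t} \leq 1-e^{-d(t-s)} \leq d(t-s)$, which is dominated by the claimed bound.

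\textbf{Predictor term.} Here $p_{T-s,0} = \hat{p}_{T-t,L}K$, where $K$ is the Euler kernel \eqref{eq:def_euler} with step $h=t-s$. By convexity of TV under mixtures,
\[ \TV{\hat{p}_{T-t,L}}{\hat{p}_{T-t,L}K} \leq \sup_x \Pr_K(\text{move from } x) \leq h \sup_x \sum_{i}\sum_{a\neq x^i} \hat{R}^i(x^i,a). \]
Under \Cref{ass:score-reg}, $\hat{R}_t(x,y)=R_t(y,x)s_t(y,x) \leq M/S$, so the sum over $i\in[d]$ and $a\in[S]$ is at most $dM$. This gives $\lesssim h\,d\,M$. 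Since $h=o(1)$, the Euler probabilities stay in $[0,1]$ for small steps, so the kernel is well defined.

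\textbf{The $S\max\{1,s^{-1}\}$ term and the main difficulty.} This term does not appear if one trusts $M$ to bound the estimated rates. It must come from bounding the \emph{true} reverse rates, which is needed if the decomposition is carried out with the exact reverse kernel or the true score. The main obstacle is bounding $q_t(y)/q_t(x)$ for Hamming-neighbors under the uniform rate. Conditioning on $x_0$, each token independently stays put with probability $\tfrac1S + (1-\tfrac1S)e^{-t}$ and moves to each other symbol with probability $\tfrac1S(1-e^{-t})$. The per-token likelihood ratio is therefore at most
\[ \frac{1+(S-1)e^{-t}}{1-e^{-t}} \lesssim S\max\{1,t^{-1}\}. \]
Averaging over $x_0$ preserves this bound. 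The sum over the $S$ neighbors per coordinate, weighted by $R=1/S$, then gives $\lesssim S\max\{1,t^{-1}\}$ per coordinate, and $\lesssim d S \max\{1,t^{-1}\}$ in total. Using $t\ge s$, we get $\max\{1,t^{-1}\} \leq \max\{1,s^{-1}\}$. Multiplying by $h$ gives the $h\,d\,S\max\{1,s^{-1}\}$ contribution.

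\textbf{Combining.} Adding the three pieces yields the stated bound.
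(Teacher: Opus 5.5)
Your proof is correct, and it gives a slightly sharper bound than the lemma states. You use the same three-term triangle inequality as the paper. Your treatment of the predictor term $\TV{\hat{p}_{T-t,L}}{p_{T-s,0}}$ — a coordinatewise union bound on the probability that the Euler kernel moves, using $\hat{R} \le M/S$ — is essentially the paper's argument.

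The genuine difference is the moving-target term $\TV{q_t}{q_s}$. The paper expands the density ratio to first order, $q_t(x)/q_s(x) = 1 + \kappa R(x,x) + \kappa \sum_{z:\Ham{z,x}=1} R(z,x)\, q_s(z)/q_s(x) + o(\kappa)$. It then bounds this uniformly in $x$ using the true-score bound $q_s(z)/q_s(x) \lesssim S\max\{1,s^{-1}\}$ (Lemma 2 of \cite{liang2025sampler}). That is exactly where the $S\max\{1,s^{-1}\}$ factor in the statement comes from, as your last paragraph correctly reconstructs.

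You instead couple $q_s$ and $q_t$ through the forward chain and bound $\TV{q_t}{q_s}$ by the probability of at least one jump, $1-e^{-d(S-1)(t-s)/S} \le d(t-s)$. This route has several advantages:
\begin{itemize}
\item it is exact rather than first-order, so there is no $o(\kappa)$ remainder to control uniformly in $x$ after dividing by $q_s(x)$;
\item it needs no score bound;
\item it does not use $t-s=o(1)$ for this term.
\end{itemize}
It yields $\TV{p_{T-s,0}}{q_s} \lesssim (t-s)\,d\,(M+1) + \eps$, which is dominated by the stated bound. So the $S\max\{1,s^{-1}\}$ term is superfluous, and your final paragraph is explanatory rather than part of the proof.

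What the paper's route gives in exchange is a uniform multiplicative control, $\abs{q_t(x)/q_s(x) - 1} \lesssim (t-s)\, d\, S \max\{1,s^{-1}\}$ to first order. That is a density-ratio (warm-start) statement stronger than TV. Since only TV is used here and in \Cref{thm:conv-unif}, your argument is the cleaner one.
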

\begin{proof}
    See \Cref{app:lem:perstep-init-proof}.
\end{proof}

Note that this result readily extends to predictors beyond the Euler predictor, provided that $t-s = o(1)$ and the transition rates remain bounded.

\subsection{Step 2: Bounding Inner-loop Estimation Error}

Before we start to analyze the estimation error, we recall the triangle inequality for the TV distance:
\begin{equation} \label{eq:tv-triangle}
    \TV{q_{t}}{\hat{p}_{T-t,L}} \leq \TV{q_{t}}{p_{T-t,L}} + \TV{\hat{p}_{T-t,L}}{p_{T-t,L}}.
\end{equation}
Here the second term is regarding the estimation error, which we analyze below. In particular, \Cref{prop:est-err} considers the error propagation in general random-scan Gibbs samplers with target distribution $\pi$.

\begin{lemma} \label{prop:est-err}
    Recall the random-scan Gibbs sampler in \eqref{eq:def-gibbs-kernel}. 
    Given a mismatched Gibbs kernel $\hat{\pi}$ that satisfies
    \[ \max_{\ell=1,\dots,L} \E_{i \sim w_i} \E_{z \sim p_\ell} \TV{\hat{\pi}^i(\cdot|z^{-i})}{\pi^i(\cdot|z^{-i})} \leq \epsilon,\]
    we have
    \[ \TV{\hat{p}_L}{p_L} \leq L \epsilon. \]
\end{lemma}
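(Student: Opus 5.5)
We argue by a telescoping (hybrid) argument over the $L$ Gibbs steps. Let $P$ denote the exact random-scan Gibbs kernel from \eqref{eq:def-gibbs-kernel} with target $\pi$, and $\hat{P}$ the mismatched kernel obtained by replacing $\pi^i(\cdot|z^{-i})$ with $\hat{\pi}^i(\cdot|z^{-i})$ and keeping the same index weights $w_i$. Both chains start from the same initial law $p_0=\hat{p}_0$, with $p_\ell = p_0 P^\ell$ and $\hat{p}_\ell = p_0\hat{P}^\ell$. We write the one-step decomposition
\[ \hat{p}_\ell - p_\ell = (\hat{p}_{\ell-1} - p_{\ell-1})\hat{P} + p_{\ell-1}(\hat{P} - P). \]
We then take total variation norms on both sides.

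\textbf{Step 1 (contraction of the first term).} $\hat{P}$ is a Markov kernel, so by the data-processing inequality $\TV{\hat{p}_{\ell-1}\hat{P}}{p_{\ell-1}\hat{P}} \le \TV{\hat{p}_{\ell-1}}{p_{\ell-1}}$. We need no mixing or spectral-gap property here; non-expansiveness is enough.

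\textbf{Step 2 (one-step kernel mismatch).} We bound $\TV{p_{\ell-1}\hat{P}}{p_{\ell-1}P}$. We couple the two one-step updates started from the same $z\sim p_{\ell-1}$: both chains use the same index $i\sim w$ and the same $z^{-i}$, and only the resampled coordinate differs. Its two laws can be maximally coupled, so the coordinates disagree with probability exactly $\TV{\hat{\pi}^i(\cdot|z^{-i})}{\pi^i(\cdot|z^{-i})}$. Equivalently, using convexity of TV under mixtures,
\[ \TV{p_{\ell-1}\hat{P}}{p_{\ell-1}P} \le \E_{i\sim w}\E_{z\sim p_{\ell-1}} \TV{\hat{\pi}^i(\cdot|z^{-i})}{\pi^i(\cdot|z^{-i})} \le \epsilon. \]
The expectation is taken under the \emph{exact} chain's law $p_{\ell-1}$. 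This is why the hypothesis is stated with $z\sim p_\ell$, and why \Cref{ass:score-ass} uses the unperturbed $p_{T-t_k,\ell}$. There is one indexing subtlety: the step producing $p_\ell$ uses the law $p_{\ell-1}$, so the hypothesis must cover indices $0,\dots,L-1$. The statement's range $\ell=1,\dots,L$ is shifted by one from this. I would either read the maximum as ranging over the pre-update laws, or note that the index convention only shifts the range and does not affect the bound. This bookkeeping is the only place I expect care to be needed.

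\textbf{Step 3 (induction).} Combining the triangle inequality with Steps 1 and 2 gives $\TV{\hat{p}_\ell}{p_\ell} \le \TV{\hat{p}_{\ell-1}}{p_{\ell-1}} + \epsilon$. The initial distance is $\TV{\hat{p}_0}{p_0}=0$, so induction over $\ell$ yields $\TV{\hat{p}_L}{p_L}\le L\epsilon$.
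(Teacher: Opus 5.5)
Your proposal is correct and is essentially the paper's argument. The paper also telescopes $\hat{P}^L - P^L = \sum_{\ell=0}^{L-1} P^{\ell}(\hat{P}-P)\hat{P}^{L-1-\ell}$. It writes this via the conditionals $\hat{p}_{L|0}$ and $p_{L|0}$, and uses an explicit positive/negative-part split where you invoke data processing. As in your version, each one-step mismatch is evaluated under the exact laws $p_0,\dots,p_{L-1}$ and then pushed through the non-expansive mismatched kernel.

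The off-by-one you flag is present in the paper's proof too: its final sum runs over $\ell=0,\dots,L-1$ while the hypothesis covers $\ell=1,\dots,L$. Your Step 2 is also slightly more careful than the paper. You state it as an inequality, whereas the paper writes an equality there, even though the diagonal entry $y=z$ collects contributions from every coordinate.
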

\begin{proof}
    See \Cref{app:prop:est-err-proof}.
\end{proof}

Now, we specialize \Cref{prop:est-err} to the case of uniform-rate discrete diffusion models, with the estimator in \eqref{eq:est-second} and under \Cref{ass:score-ass}.

\begin{lemma} \label{lem:est-err-diffusion}
Fix $t \in [\delta,T]$ and the target $\pi = q_{t}$. Consider the estimator in \eqref{eq:est-second}. Consider the uniform-rate CTMC, and suppose that \Cref{ass:score-ass,ass:score-reg} hold, then $\TV{\hat{p}_{T-t,L}}{p_{T-t,L}} \lesssim L M \eps_{\text{est}}$.
\end{lemma}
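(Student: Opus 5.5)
The plan is to reduce to \Cref{prop:est-err} with $\pi = q_t$ and $\hat\pi^i(\cdot|z^{-i}) = \hat q_t^i(\cdot|z^{-i})$ given by \eqref{eq:est-second}. It then suffices to show that, for every $\ell \in [L]$,
\[ \E_{i\sim w_i}\E_{z\sim p_{T-t,\ell}} \TV{\hat q_t^i(\cdot|z^{-i})}{q_t^i(\cdot|z^{-i})} \lesssim M\eps_{\text{est}}. \]
\Cref{prop:est-err} then yields $\TV{\hat p_{T-t,L}}{p_{T-t,L}} \le L\cdot O(M\eps_{\text{est}})$, which is the claimed bound.

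\textbf{Step 1: pointwise bound on each conditional.} Fix $x^{-i}$ and write
\[ A(x^i) := \sum_{y^i} \frac{q_t(x^{-i}\oplus_i y^i)}{q_t(x)} = \frac{1}{q_t^i(x^i|x^{-i})}, \qquad \hat A(x^i) := \sum_{y^i} s_t(x^{-i}\oplus_i y^i, x). \]
Then $q^i = 1/A$ and $\hat q^i = 1/\hat A$, so
\[ \abs{\hat q^i(x^i) - q^i(x^i)} = \frac{\abs{A-\hat A}}{A\hat A} = q^i(x^i)\,\hat q^i(x^i)\,\abs{A-\hat A}. \]
Summing over $x^i$ gives
\[ 2\,\TV{\hat q^i}{q^i} \le \sum_{x^i} q^i(x^i)\,\hat q^i(x^i)\sum_{y^i\neq x^i}\abs{s_t(x^{-i}\oplus_i y^i,x)-\frac{q_t(x^{-i}\oplus_i y^i)}{q_t(x)}}. \]
The $y^i = x^i$ term vanishes because $s_t(x,x)\equiv 1$.

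\textbf{Step 2: absorb the weights.} We need $\hat q^i(x^i)\lesssim$ something that converts the unweighted sum into $\sum_y R_t(y,x)\abs{\cdot}$, where $R_t(y,x)=1/S$. By \Cref{ass:score-reg}, $\hat A \ge 1 + (S-1)M^{-1} \ge S/M$, hence $\hat q^i \le M/S$. The bound becomes
\[ 2\,\TV{\hat q^i}{q^i} \le M\,\E_{x^i\sim q_t^i(\cdot|x^{-i})}\sum_{y:\Ham{x,y}=1,\,y^{-i}=x^{-i}} R_t(y,x)\abs{s_t(y,x)-\frac{q_t(y)}{q_t(x)}}. \]
This is where the factor $M$ enters. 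The fact that $\hat q^i$ carries a $1/S$ matching the uniform rate is the key structural point.

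\textbf{Step 3: average over $z\sim p_{T-t,\ell}$ and $i$.} The integrand depends on $z$ only through $z^{-i}$, so the expectation over $z$ reduces to one over $z^{-i}\sim p^{-i}_{T-t,\ell}$, with the inner $x^i$ drawn from $q_t(\cdot|z^{-i})$. This is exactly the measure in \Cref{ass:score-ass}. Since only the single-coordinate-$i$ terms appear, the sum is bounded by the full Hamming-1 sum in the assumption, so each $i$ contributes at most $M\eps_{\text{est}}$. Averaging over $i\sim w_i$ preserves the bound.

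\textbf{Main obstacle.} The delicate points are Step 2 and the measure matching in Step 3. In Step 2 one must avoid a lower bound on the true $q^i$, so that $M$ enters only once and not as $M^2$; that is why the estimated conditional $\hat q^i$ is the factor bounded, via \Cref{ass:score-reg}. In Step 3 the expectation in \Cref{prop:est-err} is over $z\sim p_\ell$, whereas the assumption draws $x^i$ from the true conditional. This is handled because the pointwise bound already contains $q^i(x^i)$ as the averaging weight, so $z^i$ never appears. The remaining check is that the indexing convention for $p_\ell$ (pre- versus post-update) in \Cref{prop:est-err} matches the $\ell\in[L_k]$ in \Cref{ass:score-ass}.
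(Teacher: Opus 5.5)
Your proposal is correct and follows the paper's proof essentially line for line. The paper also writes $\abs{\hat q_t^i(a|x^{-i}) - q_t^i(a|x^{-i})} = q_t^i(a|x^{-i})\,\abs{1/q_t^i(a|x^{-i}) - \hat A(a)}\,\hat A(a)^{-1}$, bounds $\hat A(a)^{-1} \le M S^{-1}$ via \Cref{ass:score-reg} so that $M$ enters only once, absorbs $S^{-1}$ into $R_t(y,x)$ to match \Cref{ass:score-ass}, and then applies \Cref{prop:est-err}. The indexing point you flag is a harmless off-by-one already in the paper: \Cref{prop:est-err} states its condition for $\ell = 1,\dots,L$ but its proof uses $\ell = 0,\dots,L-1$, and reading \Cref{ass:score-ass} over $\ell = 0,\dots,L-1$ fixes it.
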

\begin{proof}
    See \Cref{app:lem:est-err-diffusion-proof}.
\end{proof}

To show that our proof does not depend on the estimator in \eqref{eq:est-second}, we have the following result which uses instead the estimator in \eqref{eq:est-first}.

\begin{lemma} \label{lem:est-err-diffusion-other}
Under the same conditions as in \Cref{lem:est-err-diffusion}, but instead consider the estimator in \eqref{eq:est-first}. Then, $\TV{\hat{p}_{T-t,L}}{p_{T-t,L}} \lesssim L M^2 \eps_{\text{est}}$.
\end{lemma}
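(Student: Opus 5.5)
The plan is to reduce the claim to \Cref{prop:est-err} with $\pi = q_t$, so that $\TV{\hat{p}_{T-t,L}}{p_{T-t,L}} \le L\epsilon$. It then suffices to show, for every $\ell$ and $i$,
\[ \E_{z \sim p_{T-t,\ell}} \TV{\hat{q}_t^i(\cdot|z^{-i})}{q_t^i(\cdot|z^{-i})} \lesssim M^2 \eps_{\text{est}}. \]
The only new work compared with \Cref{lem:est-err-diffusion} is a pointwise perturbation bound for the averaged ratio estimator \eqref{eq:est-first}. That bound costs one extra factor of $M$.

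\textbf{Step 1 (pointwise bound for one reference $y^i$).} Fix $z^{-i}$ and a reference symbol $y^i$. Write the true ratio as $r(a) := q_t(z^{-i}\oplus_i a)/q_t(z^{-i}\oplus_i y^i)$ and the estimate as $\hat r(a) := s_t(z^{-i}\oplus_i a, z^{-i}\oplus_i y^i)$, with $r(y^i)=\hat r(y^i)=1$. The true conditional is $r(a)/\sum_b r(b)$, and $\hat q^{i,(y)}$ is $\hat r(a)/\sum_b \hat r(b)$. The elementary normalization inequality gives
\[ \sum_a \abs{\frac{\hat r(a)}{\hat Z} - \frac{r(a)}{Z}} \le \frac{2\sum_a |\hat r(a)-r(a)|}{Z}. \]
Since $Z = \sum_b r(b) = 1/q_t^i(y^i|z^{-i})$, the TV error for reference $y^i$ is at most $q_t^i(y^i|z^{-i})\sum_a |\hat r(a)-r(a)|$.

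\textbf{Step 2 (averaging over $y^i$ and matching \Cref{ass:score-ass}).} Because the TV distance is convex, the TV error of \eqref{eq:est-first} is at most $\frac1S\sum_{y^i}$ of the per-reference bounds. So I need to bound
\[ \frac1S \sum_{y^i} q_t^i(y^i|z^{-i}) \sum_{a\ne y^i}\abs{s_t(z^{-i}\oplus_i a, z^{-i}\oplus_i y^i) - \frac{q_t(z^{-i}\oplus_i a)}{q_t(z^{-i}\oplus_i y^i)}}. \]
Up to the factor $R_t(y,x)=1/S$, this is exactly $\E_{x^i\sim q_t^i(\cdot|z^{-i})}$ of the $i$-th coordinate part of the summand in \Cref{ass:score-ass}, evaluated at $x = z^{-i}\oplus_i x^i$. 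The assumption, however, draws $x^{-i}$ from $p^{-i}_{T-t,\ell}$ and only $x^i$ from the true conditional. I will therefore apply it directly after taking $\E_{z\sim p_{T-t,\ell}}$, which involves only $z^{-i}$.

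\textbf{Main obstacle.} The delicate point is the asymmetry of the error measure. The assumption controls $|s_t(y,x)-q_t(y)/q_t(x)|$ for $x$ drawn from the posterior and $y$ ranging over neighbours. In the averaged estimator, some terms may instead appear with numerator and denominator reversed, for example if the final sum is organized around $x$ rather than $y$, or if the coordinate $i$ is drawn with weights $w_i$ rather than uniformly. To convert a reversed ratio I will use $|1/u - 1/v| = |u-v|/(uv)$, which together with \Cref{ass:score-reg} costs at most a factor $M$ (the second factor $M$ arises as in \Cref{lem:est-err-diffusion}). Likewise, passing from the per-coordinate quantity to the summed quantity in \Cref{ass:score-ass} costs only the same type of bounded factors. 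Combining the pieces gives $\epsilon \lesssim M^2 \eps_{\text{est}}$, and \Cref{prop:est-err} then yields $\TV{\hat{p}_{T-t,L}}{p_{T-t,L}} \lesssim L M^2 \eps_{\text{est}}$.
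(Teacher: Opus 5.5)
Your proposal is correct, and Steps 1--2 actually prove something stronger than the statement, by a different route from the paper's.

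\textbf{How the two routes differ.} The paper also splits $\hat r/\hat Z - r/Z$ into $\hat r\,(1/\hat Z - 1/Z)$ and $(\hat r - r)/Z$. In the first piece, however, it bounds $\hat r = s_t \le M$ pointwise. The leftover $\sum_{y^i}\bigl|1/\hat Z(y^i) - q_t^i(y^i|x^{-i})\bigr|$ is then recognized as the error of the estimator \eqref{eq:est-second}. The paper reuses the pointwise bound \eqref{eq:lem:est-err-diffusion-proof-main} from \Cref{lem:est-err-diffusion}, which costs a second factor $M$ through $\hat Z^{-1} \le M/S$; that is where $M^2$ comes from. You instead sum $\hat r$ over $a$ before bounding anything, so that $\hat Z\,|1/\hat Z - 1/Z| = |Z - \hat Z|/Z \le q_t^i(y^i|z^{-i})\sum_a |\hat r(a) - r(a)|$. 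Joint convexity of TV over the references $y^i$ then gives exactly the coordinate-$i$ part of the summand in \Cref{ass:score-ass}. The orientation is right: $x = z^{-i}\oplus_i y^i$ is drawn from the posterior, $s_t(y,x)$ has $y = z^{-i}\oplus_i a$, and the factor $R_t(y,x) = 1/S$ is already present.

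\textbf{What each route buys.} Your route gives $\E_{z\sim p_{T-t,\ell}} \TV{\hat q_t^i(\cdot|z^{-i})}{q_t^i(\cdot|z^{-i})} \le \eps_{\text{est}}$ for every $i$ and $\ell$. \Cref{prop:est-err} then yields $\TV{\hat{p}_{T-t,L}}{p_{T-t,L}} \le L\eps_{\text{est}}$, which implies the claim since $M \ge 1$. So you get an $M$-free bound and need nothing from \Cref{ass:score-reg} beyond positivity of $s_t$. It also shows that \eqref{eq:est-first} is better behaved than \eqref{eq:est-second}, for which the paper only has $LM\eps_{\text{est}}$. The paper's route buys modularity, since it reuses Lemma~3's computation, at the price of the $M^2$.

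\textbf{Drop the ``main obstacle'' paragraph.} It should be deleted, not carried out, and your final ``$\epsilon \lesssim M^2\eps_{\text{est}}$'' should read $\epsilon \le \eps_{\text{est}}$.
\begin{itemize}
\item No reversed ratios appear in your bound.
\item \Cref{ass:score-ass} holds for each fixed $i$, so averaging over $w_i$ costs nothing.
\item Restricting the nonnegative neighbour sum to coordinate $i$ only makes it smaller.
\item The reciprocal conversion via $|1/u - 1/v| = |u-v|/(uv)$ would not be justified anyway. \Cref{ass:score-reg} bounds only $s_t$, not the true ratios $q_t(y)/q_t(x)$, and nothing forces $s_t(x,y) = 1/s_t(y,x)$.
\end{itemize}
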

\begin{proof}
    See \Cref{app:lem:est-err-diffusion-proof-other}.
\end{proof}

Since the only difference is in the polynomial order of $M$, and the final dependence on $M$ is only logarithmic, the same convergence rate will hold for the estimator in \eqref{eq:est-first}. For this reason, we will only focus on the estimator in \eqref{eq:est-second} in the following.

\subsection{Step 3: Combining all parts together}

We are now ready to bring the pieces together. Suppose we choose $T \asymp \log(d (\log S)/\eps^2)$, by \eqref{eq:grand-init-tv}, the grand initialization error satisfies
\[ \TV{p_{\text{init}}}{q_T} \leq \eps. \]
Also, when we choose $\kappa \asymp \brc{\log \log (d/\eps^2)}^{-1}$, the total number of outer-loop steps satisfies that (cf. \cite{chen2023improved})
\[ N \asymp \frac{T + \log \delta^{-1}}{\kappa} \lesssim \brc{\log\log(d/\eps^2)} (\log(d/\eps^2) + \log \delta^{-1}). \]

Next we use an inductive argument by inducting on $k=N-1,\dots,0$.
Suppose, for purpose of induction, that we already have 
\[ \TV{\hat{p}_{T-t_{k+1},L_{k+1}}}{q_{t_{k+1}}} \lesssim L_{k+1} \eps. \]
Then, at step $k$, by \Cref{lem:perstep-init} and under \Cref{ass:score-reg}, we have
\[ \TV{p_{T-t_k,0}}{q_{t_k}} \lesssim \kappa d (M + S \max\{1,t_{k}^{-1}\}) + L_{k+1} \eps. \]

We now consider two sources of errors. The first is one that comes from the Gibbs sampling process. Following from standard analysis (cf. \eqref{eq:wass-conv-ass}),
\begin{align*}
    \TV{p_{T-t_k,L_k}}{q_{t_k}} &\leq \rmW_1 \brc{ p_{T-t_k,L_k}, q_{t_k} } \\
    &\leq (1-\rho_{t_k})^{L_k} \rmW_1 \brc{ p_{T-t_k,0}, q_{t_k} } \\
    &\leq (1-\rho_{t_k})^{L_k} d \cdot \TV{p_{T-t_k,0}}{q_{t_k}}.
\end{align*}
Thus, in order for $\TV{p_{T-t_k,L_k}}{q_{t_k}} \lesssim \eps$, it suffices to have
\begin{align*}
    L_k &\asymp \frac{\log\brc{\frac{d}{\eps} \cdot \TV{p_{T-t_k,0}}{q_{t_k}} }}{\rho_{t_k}} \\
    &\lesssim \frac{\log\brc{\frac{d}{\eps} \cdot (\kappa d (M + S \max\{1,t_k^{-1}\}) + L_{k+1} \eps) }}{\rho_{t_k}} \\
    &\lesssim \frac{\log\brc{\frac{d^2}{\eps} (M + S \delta^{-1}) }}{\rho_{t_k}}.
\end{align*}
The second error comes from score estimation. From \Cref{lem:est-err-diffusion} and under \Cref{ass:score-ass,ass:score-reg}, we have
\[ \TV{\hat{p}_{T-t_k,L_k}}{p_{T-t_k,L_k}} \lesssim L_k M \eps_{\text{est}}. \]
Thus, the total error at step $k$ is
\begin{align*}
    \TV{\hat{p}_{T-t_k,L_k}}{q_{t_k}} &\leq \TV{p_{T-t_k,L_k}}{q_{t_k}} + \TV{\hat{p}_{T-t_k,L_k}}{p_{T-t_k,L_k}} \\
    &\lesssim \eps + L_{k} M \eps_\text{est}.
\end{align*}
As long as $\eps_{\text{est}} = O(\eps/M)$, this verifies the inductive hypothesis.

Therefore, when $\eps_{\text{est}} = O \brc{ \frac{\rho_{t_0}}{M} \eps }$, we have $\TV{\hat{p}_{T-\delta,L_0}}{q_{\delta}} \lesssim \eps$. Meanwhile, the total number of steps satisfy (with $\delta \asymp \frac{\eps}{d}$):
\begin{align*}
    N_{total} &= N + \sum_{k=N-1}^0 L_{k} \\
    &\lesssim \sum_{k=N-1}^0 \frac{\log\brc{\frac{d^3}{\eps^2} S + \frac{d^2}{\eps} M }}{\rho_{t_k}} \\
    &\lesssim \log(d/\eps^2) \frac{\log\brc{\frac{d^3}{\eps^2} S + \frac{d^2}{\eps} M }}{\rho_*}.
\end{align*}
The proof is now complete.

\section{Proof of \texorpdfstring{\Cref{thm:ctmc_corr}}{Theorem~2}}
\label{app:thm:ctmc_corr-proof}

Fix the outer-loop step $t_k$. Different from Gibbs sampling, the corrector CTMC is a (time-homogeneous) continuous-time process indexed by a continuous variable, denoted as $\tau$ (note that this is different from the outer-loop index $t$). Also, let the terminal time for the corrector CTMC be $T_k$, and let the corrector step-size be $\eta_k$. (Here we do not need to shrink step-sizes towards the end because the CTMC rate is regular.) Note that $L_k = T_k / \eta_k$. We write $\Tilde{p}_{T-t_k,\tau}$ as the (ideal and clean) continuous-time sampling probability at continuous-time $\tau$.

As before, the quantity of interest is $\TV{\hat{p}_{T-t_k,L_k}}{q_{t_k}}$. We first decompose the total error as
\begin{equation} \label{eq:campbell-corr-main}
    \TV{\hat{p}_{T-t_k,L_k}}{q_{t_k}} \leq \TV{\Tilde{p}_{T-t_k,T_k}}{q_{t_k}} + \TV{\hat{p}_{T-t_k,L_k}}{\Tilde{p}_{T-t_k,T_k}}.
\end{equation}
Here the first term is the typical convergence error of a CTMC with time-homogeneous rate matrix. Classical results \cite[Corollary~2.8]{bobkov2006lsi} have established that, for any initial distribution $p_{T-t_k,0}$,
\[ \TV{\Tilde{p}_{T-t_k,T_k}}{q_{t_k}}^2 \leq 2 \log \frac{1}{\min_x q_{t_k}(x)} \cdot e^{-2 \mu_k T_k} \lesssim d (\log S) e^{-2 \mu_k T_k}, \]
where $\mu_k$ is the modified log-Sobolev constant of the rate matrix $R^c_{k}$. Since the reverse rate $\rev{R}_{t_k}$ ultimately depends on the underlying target distribution $q_{t_k}$, $\mu_k$ is also data-dependent, but it is independent of the target accuracy $\eps$.


We now consider the second term in \eqref{eq:campbell-corr-main}, which corresponds to the discretization error. The result is included in the following lemma. While we only consider the case without estimation error for simplicity, it can be easily adapted to the more general case with standard decomposition (e.g., \cite{liang2025sampler}).

\begin{lemma} \label{lem:campbell-discretization-err}
Suppose that there is no estimation error. Under \Cref{ass:score-reg} and using the $\tau$-leaping or the Euler method, we have
\[ \KL{\Tilde{p}_{T-t_k,T_k}}{p_{T-t_k,L_k}} \lesssim \eta_k T_k d^2 M^2 \log(M S).  \]
\end{lemma}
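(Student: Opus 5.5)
We plan to bound this KL discretization error with a path-space (Girsanov-type) comparison between the continuous-time corrector CTMC and its discretized version. There is no estimation error here. Both processes start from the same initial law $p_{T-t_k,0}$, so the data-processing inequality reduces $\KL{\Tilde{p}_{T-t_k,T_k}}{p_{T-t_k,L_k}}$ to the KL between the two path measures on $[0,T_k]$. On each interval $[j\eta_k,(j+1)\eta_k]$ the exact process uses the rate $R^c_k(x_\tau,\cdot)$ at the current state $x_\tau$. The Euler or $\tau$-leaping scheme instead freezes the rate at $x_{j\eta_k}$ and lets coordinates update independently. The standard CTMC change-of-measure formula (as in \cite{ren2025stoc-int,liang2025sampler}) then gives
\[ \KL{\cdot}{\cdot} \le \sum_j \int_{j\eta_k}^{(j+1)\eta_k} \E \sum_{y\neq x_\tau} \Big[ R^c_k(x_\tau,y)\log\tfrac{R^c_k(x_\tau,y)}{R^c_k(x_{j\eta_k},y)} - R^c_k(x_\tau,y) + R^c_k(x_{j\eta_k},y)\Big]\d\tau , \]
with the $y$ ranging over Hamming-one neighbours of the frozen state handled coordinatewise.

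The key step is to bound the integrand by the probability that the state has moved since the last grid point. By \Cref{ass:score-reg} and the uniform rate, every off-diagonal entry of $R^c_k$ lies in $[S^{-1}M^{-1},\,2M]$ up to constants. The total jump rate out of any state is therefore at most $d(S-1)\cdot O(M/S)\lesssim dM$.

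On the event $x_\tau = x_{j\eta_k}$ the integrand vanishes. On the complementary event it is bounded by
\[ \sum_y \big(R\log(R/R') + R'\big) \lesssim dS\cdot \tfrac{M}{S}\log(M^2S) \lesssim dM\log(MS), \]
using the ratio bound $R/R'\le M^2 S$. The probability of at least one jump within time $\tau-j\eta_k\le\eta_k$ is at most $dM\eta_k$.

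Each interval therefore contributes $\lesssim \eta_k\cdot dM\eta_k\cdot dM\log(MS)$. Summing over the $T_k/\eta_k$ intervals yields $\eta_k T_k d^2M^2\log(MS)$, which is the claimed bound.

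The main obstacle is justifying the Girsanov/KL formula for the product-form (coordinatewise) Euler and $\tau$-leaping kernels. These kernels allow simultaneous multi-coordinate jumps within one step, so they are not literally a CTMC with frozen rates. I would first handle $\tau$-leaping, which is exactly a CTMC with rate frozen at $x_{j\eta_k}$ in which coordinates are updated independently. For Euler, I would compare its one-step kernel with the $\tau$-leaping kernel and absorb the $O((dM\eta_k)^2)$ per-step difference into the same order, following \cite{liang2025sampler}.
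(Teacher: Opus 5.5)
Your proposal is correct and follows essentially the same route as the paper. The paper likewise invokes the path-space KL formula from Theorem~1 of \cite{liang2025sampler}, uses that the integrand $g_\tau$ vanishes at the grid state $x_{\eta_k\ell}$, and bounds $\E[g_\tau(x_\tau)-g_\tau(x_{\eta_k\ell})]$ by $(\tau-\eta_k\ell)$ times the total jump rate $\lesssim dM$ times $\sup|g_\tau|\lesssim dM\log(SM)$; your ``probability of having jumped'' bound is a slightly more careful version of the paper's first-order kernel expansion. One caution: the paper takes the KL formula for the Euler scheme directly from \cite{liang2025sampler} rather than comparing Euler to $\tau$-leaping kernels, and if you pursue that comparison it must be done with per-step log-likelihood ratios, because a small per-step TV discrepancy alone does not transfer to a KL bound.
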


\begin{proof}
See \Cref{app:lem:campbell-discretization-err-proof}.
\end{proof}

Therefore, continuing from \eqref{eq:campbell-corr-main}, with perfect estimation, we have
\begin{equation*}
    \TV{p_{T-t_k,L_k}}{q_{t_k}} \lesssim \sqrt{d (\log S)} e^{-\mu_k T_k} + \sqrt{\eta_k T_k d^2 M^2 \log(M S)}.
\end{equation*}
In order to have $\TV{p_{T-t_k,L_k}}{q_{t_k}} \lesssim \eps$, it suffices to have
\[ T_k = O\brc{\mu_k^{-1} \log\frac{\sqrt{d (\log S)}}{\eps}},\quad \eta_k = \Tilde{\Theta}\brc{\mu_k \frac{\eps^2}{d^2 M^2} }, \]
which implies that $L_k = O\brc{\frac{d^2 M^2}{\mu_k^2 \eps^2}}$.
Then, with the same outer-loop step-sizes as in \Cref{thm:conv-unif} and following similar analyses, we have (ignoring log-dependencies)
\[ N_{total} = N + \sum_{k=N-1}^0 L_{k} \lesssim \frac{d^2 M^2}{\mu_*^2 \eps^2}. \]
Thus, even without estimation error, we have
\[ N_{total} = \poly(\eps^{-1}). \]

\section{Proofs of Auxiliary Lemmas}

We provide proofs for all helping lemmas in this section.

\subsection{Proof of \texorpdfstring{\Cref{lem:perstep-init}}{Lemma~1}}
\label{app:lem:perstep-init-proof}

Write $\kappa := t-s$. We first note that when $\kappa = o(1)$,
\[ q_{t|s}(x|z) = \ind{x=z} + R(z,x) \kappa + o(\kappa). \]

Thus, we have
\begin{align}
\label{eq:lem:perstep-init-main1}
    \frac{q_t(x)}{q_s(x)} &= \frac{\sum_{z} q_{t|s}(x|z) q_s(z)}{q_s(x)} \nonumber \\
    &= \frac{q_s(x) + \kappa R(x,x) q_s(x) + \kappa \sum_{z \neq x} R(z,x) q_s(z) + o(\kappa)}{q_s(x)} \nonumber \\
    &\stackrel{(i)}{=} \frac{q_s(x) + \kappa R(x,x) q_s(x) + \kappa \sum_{z: \Ham{z,x}=1} R(z,x) q_s(z) + o(\kappa)}{q_s(x)} \nonumber \\
    &= 1 + \kappa R(x,x) + \kappa \sum_{z: \Ham{z,x}=1} R(z,x) \frac{q_s(z)}{q_s(x)} + o(\kappa).
\end{align}
Here for $(i)$ we note that $R(z,x) = 0$ if $\Ham{z,x} \geq 2$. 


Another helpful relationship is regarding the (Euler) update from $\hat{p}_{T-t,L}$ to $p_{T-s,0}$, as follows. Indeed, such an update is coordinate-wise independent as $p_{T-s|T-t}(x_s|x_t) = \prod_{i=1}^d p^i_{T-s|T-t}(x_s^i|x_t^i)$. Further, from \eqref{eq:def_euler}, $p^i_{T-s|T-t}(x_s^i|x_t^i) = \hat{R}_{T-t}^i(x_t^i,x_s^i) (t-s)$ whenever $x_s^i \neq x_t^i$. Thus, under \Cref{ass:score-reg}, we have $p^i_{T-s|T-t}(x_s^i|x_t^i) \lesssim M S^{-1} (t-s)$. Also define the probability $\bm{\delta}^i(x_s^i|x_t^i) = 1$ only when $x_s^i = x_t^i$, and let $\bm{\delta}(x_s|x_t) = \prod_{i=1}^d \bm{\delta}^i(x_s^i|x_t^i)$. Then,
\begin{align}
\label{eq:lem:perstep-init-main2}
    \TV{p_t}{p_s}
    &\leq \E_{x_t\sim p_t}\sbrc{ \TV{\bm{\delta}(\cdot|x_t)}{p_{T-s|T-t}(\cdot|x_t)} } \nonumber\\
    &\leq \E_{x_t\sim p_t}\sbrc{ \sum_{i=1}^d \TV{\bm{\delta}^i(\cdot|x^i_t)}{p^i_{T-s|T-t}(\cdot|x^i_t)} } \nonumber \\
    &\lesssim d M (t-s).
\end{align}

With the results above, we have
\begin{align}
\label{eq:lem:perstep-init-main}
    \TV{p_{T-s,0}}{q_s} &\leq \TV{q_t}{q_s} + \TV{\hat{p}_{T-t,L}}{q_t} + \TV{\hat{p}_{T-t,L}}{p_{T-s,0}} \nonumber \\
    &= \frac{1}{2} \E_{x \sim q_t} \abs{ \frac{q_s(x)}{q_t(x)} - 1} + \TV{\hat{p}_{T-t,L}}{q_t} + \TV{\hat{p}_{T-t,L}}{p_{T-s,0}} \nonumber \\
    &\stackrel{(ii)}{\lesssim} \kappa \cdot \max_x  \brc{ R(x,x) + \sum_{z: \Ham{z,x}=1} R(z,x) \frac{q_s(z)}{q_s(x)} } + \eps + d M \kappa \nonumber\\
    &\stackrel{(iii)}{\lesssim} \kappa d (M + S \max\{1,s^{-1}\}) + \eps,
\end{align}
where $(ii)$ follows from \eqref{eq:lem:perstep-init-main1}, \eqref{eq:lem:perstep-init-main2} and the error assumption at time $t$, and $(iii)$ follows from the typical score upper-bound as in \cite[Lemma~2]{liang2025sampler}.
The proof is now complete.

\subsection{Proof of \texorpdfstring{\Cref{prop:est-err}}{Lemma~2}}
\label{app:prop:est-err-proof}

First, note that with random-scan Gibbs sampler, for $x$ and $y$ which only differ on the $i$-th index, we have
\[ p_{\ell+1|\ell}(y|x) = w_i \pi^i(y^i|x^{-i}),\quad \hat{p}_{\ell+1|\ell}(y|x) = w_i \hat{\pi}^i(y^i|x^{-i}). \]
Thus,
\begin{align*}
    \TV{\hat{p}_L}{p_L} &= \frac{1}{2} \sum_{z_L} \abs{\hat{p}_L(z_L) - p_L(z_L)} \\
    &= \frac{1}{2} \sum_{z_L} \abs{\sum_{z_0} p_0(z_0) \brc{\hat{p}_{L|0}(z_L|z_0) - p_{L|0}(z_L|z_0)} }\\
    &\leq \frac{1}{2} \sum_{z_L,z_0} p_0(z_0) \abs{\hat{p}_{L|0}(z_L|z_0) - p_{L|0}(z_L|z_0)},
\end{align*}
where the last line follows from Jensen's inequality. Here the inner difference can be upper-bounded as
\begin{align*}
    &\abs{\hat{p}_{L|0}(z_L|z_0) - p_{L|0}(z_L|z_0)} \\
    &\stackrel{(i)}{=} \abs{\sum_{z_1} \brc{ \hat{p}_{L|1}(z_L|z_1) \hat{p}_{1|0}(z_1|z_0) - p_{L|1}(z_L|z_1) p_{1|0}(z_1|z_0) } } \\
    &\leq \underbrace{\abs{\sum_{z_1} \hat{p}_{L|1}(z_L|z_1) \brc{\hat{p}_{1|0}(z_1|z_0) - p_{1|0}(z_1|z_0)} } }_{T_1} + \underbrace{\abs{\sum_{z_1} p_{1|0}(z_1|z_0) \brc{ \hat{p}_{L|1}(z_L|z_1)  - p_{L|1}(z_L|z_1)}  } }_{T_2},
\end{align*}
where $(i)$ follows because the Gibbs update step is Markov. Here note that for $T_2$, with Jensen's inequality,
\begin{align*}
    \frac{1}{2} \sum_{z_L,z_0} p_0(z_0) T_2 &= \frac{1}{2} \sum_{z_L,z_0} p_0(z_0) \abs{\sum_{z_1} p_{1|0}(z_1|z_0) \brc{ \hat{p}_{L|1}(z_L|z_1)  - p_{L|1}(z_L|z_1)}  } \\
    &\leq \frac{1}{2} \sum_{z_L,z_0,z_1} p_0(z_0) p_{1|0}(z_1|z_0) \abs{\hat{p}_{L|1}(z_L|z_1)  - p_{L|1}(z_L|z_1) } \\
    &= \frac{1}{2} \sum_{z_L,z_1} p_{1}(z_1) \abs{\hat{p}_{L|1}(z_L|z_1)  - p_{L|1}(z_L|z_1) }
\end{align*}
which establishes a recursive relationship. For the first term, we use the following helpful relationship: for arbitrary functions $q_1(x),q_2(x),a(x)$ such that $a(x) \geq 0$,
\begin{align*}
    &\abs{\sum_{x} \brc{ q_1(x) - q_2(x)} a(x)}\\
    &\leq \abs{\sum_{x:q_1 \geq q_2} \brc{ q_1(x) - q_2(x)} a(x)} + \abs{\sum_{x:q_1 \leq q_2} \brc{ q_2(x) - q_1(x)} a(x)}\\
    &= \sum_{x:q_1 \geq q_2} \brc{ q_1(x) - q_2(x)} a(x) + \sum_{x:q_1 \leq q_2} \brc{ q_2(x) - q_1(x)} a(x) \quad \text{(since $a(x) > 0$)}.
\end{align*}
Thus, since $\hat{p}_{L|1} \in (0,1]$,
\begin{align*}
    \frac{1}{2} \sum_{z_L,z_0} p_0(z_0) T_1 &= \frac{1}{2} \sum_{z_L,z_0} p_0(z_0) \abs{\sum_{z_1} \hat{p}_{L|1}(z_L|z_1) \brc{\hat{p}_{1|0}(z_1|z_0) - p_{1|0}(z_1|z_0)} } \\
    &\leq \frac{1}{2} \sum_{z_L,z_0} p_0(z_0) \bigg( \sum_{z_1:\hat{p}_{1|0} \geq p_{1|0}} \hat{p}_{L|1}(z_L|z_1) \brc{\hat{p}_{1|0}(z_1|z_0) - p_{1|0}(z_1|z_0)} \\
    &\qquad + \sum_{z_1:\hat{p}_{1|0} \leq p_{1|0}} \hat{p}_{L|1}(z_L|z_1) \brc{p_{1|0}(z_1|z_0) - \hat{p}_{1|0}(z_1|z_0)} \bigg)\\
    &= \frac{1}{2} \sum_{z_0} p_0(z_0) \bigg( \sum_{z_1:\hat{p}_{1|0} \geq p_{1|0}} \brc{\hat{p}_{1|0}(z_1|z_0) - p_{1|0}(z_1|z_0)} \\
    &\qquad + \sum_{z_1:\hat{p}_{1|0} \leq p_{1|0}} \brc{p_{1|0}(z_1|z_0) - \hat{p}_{1|0}(z_1|z_0)} \bigg) \\
    &= \frac{1}{2} \sum_{z_0,z_1} p_0(z_0) \abs{\hat{p}_{1|0}(z_1|z_0) - p_{1|0}(z_1|z_0)}.
\end{align*}
Combining two terms and providing upper-bounds recursively, we get
\begin{equation} \label{eq:prop:est-err-result}
    \begin{aligned}
        \TV{\hat{p}_L}{p_L} &\leq \sum_{\ell=0}^{L-1} \sum_{z_\ell} p_\ell(z_\ell) \brc{ \frac{1}{2} \sum_{z_{\ell+1}} \abs{\hat{p}_{\ell+1|\ell}(z_{\ell+1}|z_{\ell}) - p_{\ell+1|\ell}(z_{\ell+1}|z_{\ell})}  } \\
        &= \sum_{\ell=0}^{L-1} \sum_{z_\ell} p_\ell(z_\ell) \sum_{i=1}^d w_i \brc{ \frac{1}{2} \sum_{a} \abs{\hat{\pi}^i(a|z_{\ell}) - \pi^i(a|z_{\ell})}  } \\
        &\leq L \eps_\text{est}. 
    \end{aligned}
\end{equation}
The proof is now complete.

\subsection{Proof of \texorpdfstring{\Cref{lem:est-err-diffusion}}{Lemma~3}}
\label{app:lem:est-err-diffusion-proof}

We first fix $i \in [d]$ and $x^{-i}$. We have
\begin{align} \label{eq:lem:est-err-diffusion-proof-main}
    &\TV{\hat{q}_t^i(\cdot|x^{-i})}{q_t^i(\cdot|x^{-i})} \nonumber \\
    &= \frac{1}{2} \sum_{a \in [S]} \abs{\hat{q}_t^i(a|x^{-i}) - q_t^i(a|x^{-i})} \nonumber \\
    &= \frac{1}{2} \sum_{a \in [S]} \abs{\brc{ \sum_{y^i \in [S]} s_t(x^{-i} \oplus_i y^i,x^{-i} \oplus_i a) }^{-1} - q_t^i(a|x^{-i})} \nonumber \\
    &= \frac{1}{2} \sum_{a \in [S]} q_t^i(a|x^{-i}) \abs{\frac{1}{q_t^i(a|x^{-i}) \sum_{y^i \in [S]} s_t(x^{-i} \oplus_i y^i,x^{-i} \oplus_i a)} - 1} \nonumber \\
    &= \frac{1}{2} \sum_{a \in [S]} q_t^i(a|x^{-i}) \abs{\frac{1}{q_t^i(a|x^{-i})} - \sum_{y^i \in [S]} s_t(x^{-i} \oplus_i y^i,x^{-i} \oplus_i a)} \brc{\sum_{y^i \in [S]} s_t(x^{-i} \oplus_i y^i,x^{-i} \oplus_i a)}^{-1} \nonumber \\
    &\stackrel{(i)}{\leq} M S^{-1} \cdot \frac{1}{2} \sum_{a \in [S]} q_t^i(a|x^{-i}) \abs{\frac{1}{q_t^i(a|x^{-i})} - \sum_{y^i \in [S]} s_t(x^{-i} \oplus_i y^i,x^{-i} \oplus_i a)} \nonumber \\
    &\leq M S^{-1} \cdot \frac{1}{2} \sum_{a \in [S]} q_t^i(a|x^{-i})  \sum_{y^i \in [S]} \abs{\frac{q_t(x^{-i} \oplus_i y^i)}{q_t(x^{-i} \oplus_i a)} - s_t(x^{-i} \oplus_i y^i,x^{-i} \oplus_i a)},
\end{align}
where $(i)$ follows because, for each $a$,
\[ \sum_{y^i \in [S]} s_t(x^{-i} \oplus_i y^i,x^{-i} \oplus_i a) \geq M^{-1} S \implies \brc{\sum_{y^i \in [S]} s_t(x^{-i} \oplus_i y^i,x^{-i} \oplus_i a)}^{-1} \leq M S^{-1}.\]

With the above, we now specialize the condition of \Cref{prop:est-err} to the case where estimation error arises from inaccurate diffusion scores.
For each $i \in [d]$ and $\ell \in L$, we have
\begin{align*}
    &\E_{x \sim p_{T-t,\ell}} \TV{\hat{\pi}^i(\cdot|x^{-i})}{\pi^i(\cdot|x^{-i})} \\
    &\lesssim M S^{-1} \E_{x^{-i} \sim p^{-i}_{T-t,\ell}} \sum_{a \in [S]} q_t^i(a|x^{-i}) \sum_{y^i \in [S]} \abs{\frac{q_t(x^{-i} \oplus_i y^i)}{q_t(x^{-i} \oplus_i a)} - s_t(x^{-i} \oplus_i y^i,x^{-i} \oplus_i a)} \\
    &= M \cdot \E_{x^{-i} \sim p^{-i}_{T-t,\ell}} \E_{a \sim q_t(\cdot|x^{-i})} \sum_{y^i \in [S]} R_t(y,x) \abs{\frac{q_t(x^{-i} \oplus_i y^i)}{q_t(x^{-i} \oplus_i a)} - s_t(x^{-i} \oplus_i y^i,x^{-i} \oplus_i a) } \\
    &\leq M \eps_{\text{est}}
\end{align*}
where the last line follows from \Cref{ass:score-ass}.
The proof is now complete.



\subsection{Proof of \texorpdfstring{\Cref{lem:est-err-diffusion-other}}{Lemma~4}}
\label{app:lem:est-err-diffusion-proof-other}

The proof is very similar to \Cref{app:lem:est-err-diffusion-proof}, except with a slightly different upper bound on the per-step posterior mismatch. We use the estimator from \eqref{eq:est-first} such that
\[ \hat{q}_{t}^i(x^i|x^{-i}) = \frac{1}{S} \sum_{y^i \in [S]} \frac{s_t(x,x^{-i} \oplus_i y^i)}{\sum_{x^i \in [S]} s_t(x,x^{-i} \oplus_i y^i)}. \] 
Also note that
\[ q_{t}^i(x^i|x^{-i}) = \frac{1}{S} \sum_{y^i \in [S]} \frac{\frac{q_t(x)}{q_t(x^{-i} \oplus_i y^i)}}{\sum_{x^i \in [S]} \frac{q_t(x)}{q_t(x^{-i} \oplus_i y^i)}}. \] 
Again we fix $i \in [d]$ and $x^{-i}$. Then, we have
\begin{align*}
    &\TV{\hat{q}_t^i(\cdot|x^{-i})}{q_t^i(\cdot|x^{-i})} \\
    &= \frac{1}{2} \sum_{a \in [S]} \abs{\hat{q}_t^i(a|x^{-i}) - q_t^i(a|x^{-i})} \\
    &\leq \frac{1}{2} \sum_{x^i \in [S]} \frac{1}{S} \sum_{y^i \in [S]} \abs{ \frac{s_t(x,x^{-i} \oplus_i y^i)}{\sum_{x^i \in [S]} s_t(x,x^{-i} \oplus_i y^i)} - \frac{\frac{q_t(x)}{q_t(x^{-i} \oplus_i y^i)}}{\sum_{x^i \in [S]} \frac{q_t(x)}{q_t(x^{-i} \oplus_i y^i)}}} \\
    &\leq \frac{1}{2} \sum_{x^i \in [S]} \frac{1}{S} \sum_{y^i \in [S]} s_t(x, x^{-i} \oplus_i y^i) \abs{\frac{1}{\sum_{x^i \in [S]} s_t(x,x^{-i} \oplus_i y^i)} - \frac{1}{\sum_{x^i \in [S]} \frac{q_t(x)}{q_t(x^{-i} \oplus_i y^i)}}} \\
    &\qquad + \frac{1}{2} \sum_{x^i \in [S]} \frac{1}{S} \sum_{y^i \in [S]} \frac{1}{\sum_{x^i \in [S]} \frac{q_t(x)}{q_t(x^{-i} \oplus_i y^i)}} \abs{ s_t(x,x^{-i} \oplus_i y^i) - \frac{q_t(x)}{q_t(x^{-i} \oplus_i y^i)} } \\
    &\stackrel{(i)}{=} \frac{1}{2} \sum_{x^i \in [S]} \frac{1}{S} \sum_{y^i \in [S]} s_t(x, x^{-i} \oplus_i y^i) \abs{\frac{1}{\sum_{x^i \in [S]} s_t(x,x^{-i} \oplus_i y^i)} - q_t^i(y^i | x^{-i})} \\
    &\qquad + \frac{1}{2} \sum_{x^i \in [S]} \frac{1}{S} \sum_{y^i \in [S]} q_t^i(y^i | x^{-i}) \abs{ s_t(x,x^{-i} \oplus_i y^i) - \frac{q_t(x)}{q_t(x^{-i} \oplus_i y^i)} } \\
    &\stackrel{(ii)}{\leq} \frac{1}{2} M \sum_{a \in [S]} \abs{\brc{\sum_{y^i \in [S]} s_t(x^{-i} \oplus_i y^i,x^{-i} \oplus_i a)}^{-1} - q_t^i(a | x^{-i})} \\
    &\qquad + \frac{1}{2} \cdot \frac{1}{S} \sum_{a \in [S]} q_t^i(a | x^{-i}) \sum_{y^i \in [S]} \abs{ s_t(x^{-i} \oplus_i y^i,x^{-i} \oplus_i a) - \frac{q_t(x^{-i} \oplus_i y^i)}{q_t(x^{-i} \oplus_i a)} } \\
    &\stackrel{(iii)}{\leq} M^2 S^{-1} \cdot \frac{1}{2} \sum_{a \in [S]} q_t^i(a|x^{-i})  \sum_{y^i \in [S]} \abs{\frac{q_t(x^{-i} \oplus_i y^i)}{q_t(x^{-i} \oplus_i a)} - s_t(x^{-i} \oplus_i y^i,x^{-i} \oplus_i a)},
\end{align*}
where $(i)$ follows because
\[ \frac{1}{\sum_{x^i \in [S]} \frac{q_t(x)}{q_t(x^{-i} \oplus_i y^i)}} = \frac{1}{\sum_{x^i \in [S]} \frac{q_t^i(x^i|x^{-i})}{q_t^i(y^i | x^{-i})}} = q_t^i(y^i | x^{-i}), \]
$(ii)$ follows from \Cref{ass:score-reg} and by relabeling $(x^i,y^i)$ as $(y^i,a)$, and $(iii)$ follows from the proof of \Cref{lem:est-err-diffusion} (see \eqref{eq:lem:est-err-diffusion-proof-main}). Compared with \eqref{eq:lem:est-err-diffusion-proof-main}, the only difference is in the extra $M$ constant (that arises from normalization). The rest of the proof follows in the same manner as that of \Cref{lem:est-err-diffusion}, and the proof is complete.

\subsection{Proof of \texorpdfstring{\Cref{lem:campbell-discretization-err}}{Lemma~5}}
\label{app:lem:campbell-discretization-err-proof}

The proof follows similar to \cite[Theorem~2]{liang2025sampler}, but with several key differences. In particular, \cite[Theorem~2]{liang2025sampler} is specialized to the predictor error in the outer-loop CTMC. In such a case, the analyzed reverse CTMC has a corresponding forward process that provides nice geometric properties (e.g., dimension-wise factorization used in \cite[Lemma~2]{liang2025sampler}). This is different from our case with the corrector CTMC, which does not have such nice properties. On the other hand, different from the time-inhomogeneous reverse CTMC, the corrector CTMC has a time-homogeneous rate that helps the analysis.

We will omit all subscript $k$ whenever obvious. Define $\hat{R}^c_\tau$ as the rate corresponding to the discretized sampling process at time $\tau$. For $\tau \in [\eta_k \ell, \eta_k (\ell+1)]$, \textit{given the current state} $x_{\eta_k \ell}$, we have $\hat{R}_{\tau}^c(x_{\eta_k \ell},\cdot) = R_{\eta_k \ell}^c(x_{\eta_k \ell},\cdot)$. (Note that they are not equal for arbitrary state $x$ within the time duration, cf. \cite{liang2025sampler}.) Also note that $\Tilde{p}_{T-t_k,0} = p_{T-t_k,0}$.

Recall the definition of $R^c$ in \eqref{eq:campbell-corr-def}. Under \Cref{ass:score-reg}, a useful upper bound for $R^c$ is that, $\forall x \neq y$,
\begin{align} \label{eq:Rc-bound}
    R^c(x,y) &= R_{t_k}(x,y) + \hat{R}_{t_k}(x,y) = R_{t_k}(x,y) + R_{t_k}(y,x) s_{t_k}(y,x) \nonumber \\
    &\in \sbrc{S^{-1}(1+M^{-1}), S^{-1}(1+M)}.
\end{align}
Also note that $R^c(x,y) = 0$ whenever $\Ham{x,y} \geq 2$. Since this bound is uniform over $(x,y)$, from \cite[Appendix~B.5]{campbell2022discrete}, we also have the same uniform upper bound for $\hat{R}_{\tau}^c(x,y)$ as
\[ \hat{R}_{\tau}^c(x,y) \in \sbrc{S^{-1}(1+M^{-1}), S^{-1}(1+M)}. \]

Now, \cite[Theorem~1]{liang2025sampler} implies that
\begin{align*}
    &\KL{\Tilde{p}_{T-t_k,T_k}}{p_{T-t_k,L_k}} \\
    &\leq \sum_{\ell=0}^{L_k-1} \int_{\eta_k \ell}^{\eta_k (\ell+1)} \E_{x_\tau \sim \Tilde{p}_{T-t_k,\tau}} \underbrace{\sbrc{ \sum_{y:y \neq x_\tau} \hat{R}^c_{\tau}(x_\tau,y) - R^c(x_\tau,y) + R^c(x_\tau,y) \log \frac{R^c(x_\tau,y)}{\hat{R}_\tau^c(x_\tau,y)} } }_{=: g_\tau(x_\tau)}  \d \tau \\
    &\stackrel{(i)}{=} \sum_{\ell=0}^{L_k-1} \int_{\eta_k \ell}^{\eta_k (\ell+1)} \E_{\substack{x_\tau \sim \Tilde{p}_{T-t_k,\tau} \\ x_{\eta_k \ell} \sim \Tilde{p}_{T-t_k,\eta_k \ell}} } \sbrc{ g_\tau(x_\tau) - g_{\tau}(x_{\eta_k \ell}) }  \d \tau
\end{align*}
where $(i)$ follows because $g_{\tau}(x_{\eta_k \ell})=0$. Here
note that
\begin{align*}
    &\E \sbrc{ g_\tau(x_\tau) - g_{\tau}(x_{\eta_k \ell}) } \\
    &= \E_{x_{\eta_k \ell} \sim \Tilde{p}_{T-t_k,\eta_k \ell}} \sbrc{ \sum_{x_\tau} \Tilde{p}_{T-t_k,\tau|\eta_k \ell}(x_\tau | x_{\eta_k \ell}) g_\tau(x_\tau) - g_\tau(x_{\eta_k \ell}) } \\
    &\lesssim \E_{x_{\eta_k \ell} \sim \Tilde{p}_{T-t_k,\eta_k \ell}} \sbrc{ \sum_{x_\tau} \brc{\ind{x_\tau=x_{\eta_k \ell}} + R^c(x_{\eta_k \ell},x_\tau) (\tau-\eta_k \ell)} g_\tau(x_\tau) - g_\tau(x_{\eta_k \ell}) } \\
    &= (\tau-\eta_k \ell) \cdot \E_{x_{\eta_k \ell} \sim \Tilde{p}_{T-t_k,\eta_k \ell}} \sbrc{ \sum_{x_\tau: \Ham{x_\tau, x_{\eta_k \ell}}=1} R^c(x_{\eta_k \ell},x_\tau) g_\tau(x_\tau) } \\
    &\stackrel{(ii)}{\lesssim} (\tau-\eta_k \ell) \cdot (d S) \cdot \frac{M}{S} \cdot d M \log(S M),
\end{align*}
where in $(ii)$ we have used the uniform upper bound for $R^c$ in \eqref{eq:Rc-bound} and that for $g_\tau(x)$ as follows:
\begin{align*}
    \abs{g_\tau(x)} &= \abs{\sum_{y:y \neq x} \hat{R}_\tau^c(x,y) - R^c(x,y) + R^c(x,y) \log \frac{R^c(x,y)}{\hat{R}_\tau^c(x,y)}} \\
    &\leq \sum_{y:\Ham{x,y}=1} \abs{\hat{R}_\tau^c(x,y)} + \abs{R^c(x,y)} + \abs{R^c(x,y)} \cdot \abs{\log \frac{R^c(x,y)}{\hat{R}_\tau^c(x,y)}} \\
    &\lesssim d (S-1) \cdot \frac{M}{S} \log (S M).
\end{align*}
Thus,
\begin{align*}
    &\KL{\Tilde{p}_{T-t_k,T_k}}{p_{T-t_k,L_k}} \\
    &\lesssim d^2 S M \log(S M) \sum_{\ell=0}^{L_k-1} \int_{\eta_k \ell}^{\eta_k (\ell+1)} (\tau-\eta_k \ell) \d \tau \\
    &\lesssim d^2 S M \log(S M) \sum_{\ell=0}^{L_k-1} \eta_k^2 \\
    &= \eta_k T_k d^2 M^2 \log(S M).
\end{align*}
The proof is now complete.

\end{document}